\documentclass[10pt]{article} 

\usepackage[preprint]{rlj} 

\usepackage{amssymb}            
\usepackage{mathtools}          
\usepackage{mathrsfs}           
\usepackage{graphicx}           
\usepackage{subcaption}         
\usepackage[space]{grffile}     
\usepackage{url}                
\usepackage{lipsum}             

\usepackage{booktabs}
\usepackage{arydshln}
\usepackage[table]{xcolor}
\usepackage[utf8]{inputenc} 
\usepackage[T1]{fontenc}    
\usepackage{hyperref}       
\usepackage{url}            
\usepackage{booktabs}       
\usepackage{amsfonts}       
\usepackage{nicefrac}       
\usepackage{microtype}      
\usepackage{xcolor}         

\usepackage{graphicx}
\usepackage{amsmath} 
\usepackage{amsthm}

\newtheorem{proposition}{Proposition}
\newtheorem{assumption}{Assumption}
\newtheorem{lemma}{Lemma}
\newtheorem{theorem}{Theorem}

\title{On the Convergence of Semi-Gradient TD:\\ A Mean Field Perspective}

\setrunningtitle{Enter Your Running Title Here}

\author{\name Haruki Settai \email sharuki692@g.ecc.u-tokyo.ac.jp \\
       \addr The University of Tokyo
       \AND
       \name Tadashi Kozuno \email  tadashi.kozuno@gmail.com\\
       \addr Sun Intelligence
       \AND
       \name Shinji Ito \email shinji@mist.i.u-tokyo.ac.jp \\
       \addr The University of Tokyo \& RIKEN}

\begin{document}

\maketitle  

\begin{abstract}
Semi-gradient temporal-difference learning is a core algorithmic component of reinforcement learning, but bootstrapping with function approximation generally induces a non-conservative vector field and can lead to instability. We analyze this effect for two-layer neural networks in the mean-field regime and identify a simple stability mechanism in which entropy-induced relaxation dominates the sensitivity of the moving bootstrap target. Under this condition, we prove exponential convergence to a unique fixed point and quantify its bias relative to the unregularized Bellman solution. We also show that the same mechanism extends to entropy-regularized off-policy control through soft Q-learning.
\end{abstract}

\section{Introduction}
Temporal-difference (TD) learning is one of the basic mechanisms for value estimation in reinforcement learning. Its semi-gradient form is particularly simple, updating the current prediction toward a bootstrapped target without differentiating through that target. This simplicity, however, also makes its dynamics fundamentally different from ordinary gradient descent. With function approximation, the expected semi-gradient update generally defines a non-conservative vector field, and the resulting dynamics can diverge or oscillate even in the absence of sampling noise~\citep{baird:95,gordon:00,asadi:23}.

Understanding this instability becomes particularly challenging with neural-network function approximation, where the representation itself evolves during learning. Despite the practical importance of neural TD methods, convergence theory for semi-gradient TD beyond linear or near-initialization regimes remains limited. The mean-field regime provides a tractable framework for studying feature learning in wide two-layer neural networks by describing training through the evolution of a distribution over parameters. Yet mean-field analyses of semi-gradient TD remain scarce. \citet{zhang:20} establish a best-iterate guarantee in function value, but do not prove convergence of the learning dynamics themselves. \citet{agazzi:22} characterize the equilibrium structure of mean-field semi-gradient TD and show that a convergent trajectory induces the true value function under suitable assumptions, while their convergence guarantee is obtained in a lazy-training regime. Thus, convergence of nonlinear mean-field semi-gradient TD dynamics, and the mechanism that can stabilize them, remain poorly understood.

In this work, we study noisy semi-gradient TD with two-layer neural networks in the mean-field regime. We write $q_t(\theta)d\theta$ for the parameter law evolving under the mean-field semi-gradient Langevin dynamics. Following the changing-objective viewpoint of \citet{asadi:23}, for any parameter law $q$, let $F(q,\nu)$ denote the frozen-target regression objective with entropy and quadratic regularization, where the bootstrap target is frozen at $q$ and $\nu$ represents the online predictor. We define the corresponding frozen-target minimizer map by
$$
T(q):=\arg\min_{\nu}F(q,\nu).
$$
If the bootstrap target were fixed, mean-field Langevin theory provides relaxation toward $T(q)$ \citep{chizat:22,nitanda:22}. In semi-gradient TD, however, the target is generated by the current parameter law itself, so the relevant minimizer is $T(q_t)$ and moves together with $q_t$. The central question is whether relaxation toward the current frozen-target minimizer is strong enough to dominate this moving-target effect.

Our first contribution is a last-iterate convergence guarantee for
mean-field semi-gradient TD. The analysis is governed by the relaxation rate $\rho\tau$ induced by noise and regularization and the constant $L_p$ measuring the sensitivity of the frozen-target objective to changes in the bootstrap target. If $\rho\tau>L_p$, then the frozen-target optimality gap
$$
G(t):=F(q_t,q_t)-F(q_t,T(q_t))
$$
decays exponentially as
$$
G(t)\leq e^{-2(\rho\tau-L_p)t}G(0).
$$
Under the same condition, the map $T$ is $L_p/(\rho\tau)$-contractive in Wasserstein distance and therefore admits a unique fixed point $q_\star=T(q_\star)$. Combining the decay of the frozen-target optimality gap with this contraction yields exponential convergence of the actual semi-gradient parameter law $q_t$ to $q_\star$ (Theorem~\ref{thm:policy_evaluation_convergence}). As summarized in Table~\ref{tab:comparison}, previous results establish either feature learning or last-iterate convergence, but not both simultaneously. We provide the first last-iterate convergence guarantee for mean-field semi-gradient TD.

The limiting law $q_\star$, however, solves an entropy- and
quadratically-regularized problem rather than the original Bellman
equation. We therefore complement the dynamical convergence result with
an explicit bound on the resulting approximation and regularization
error (Theorem~\ref{thm:policy_evaluation_error}). In the realizable
case, suppose there exists a parameter law $q^\pi$ such that
$V_{q^\pi}=V^\pi$ and
$D_{\mathrm{KL}}(q^\pi\|g_{\lambda,\tau})<\infty$. Then
$$
\|V_{q_\star}-V^\pi\|_{L^2(d^\pi)}
\leq
\sqrt{
\frac{\tau}{1-\gamma}
D_{\mathrm{KL}}
(q^\pi\|g_{\lambda,\tau})
}.
$$
Thus, the same regularization that stabilizes the learning dynamics introduces a quantitatively controlled bias in the learned value function. Our general result also allows for function-approximation error.

Finally, we extend the same stability mechanism to entropy-regularized off-policy soft Q-learning, obtaining exponential convergence to a unique self-consistent equilibrium (Theorem~\ref{thm:soft_q_convergence}). We also quantify the error of this equilibrium relative to the soft Bellman fixed point (Theorem~\ref{thm:soft_q_error}).

\begin{table}[t]
    \centering
    \caption{Comparison of convergence results for nonlinear TD learning.}
    \label{tab:comparison}
    \small
    \begin{tabular}{lcccc}
        \toprule
        & Neural function
        & Feature
        & Last-iterate
        & Control \\
        & approximation
        & learning
        & convergence
        & extension \\
        \midrule
        Cai et al.\ (2019)
        & \checkmark
        & --
        & --
        & \checkmark \\

        Zhang et al.\ (2020)
        & \checkmark
        & \checkmark
        & --
        & \checkmark \\

        Agazzi and Lu (2022)
        & \checkmark
        & --
        & \checkmark
        & -- \\

        Asadi et al.\ (2023)
        & --
        & --
        & \checkmark
        & \checkmark \\

        \textbf{Ours}
        & \checkmark
        & \checkmark
        & \checkmark
        & \checkmark \\
        \bottomrule
    \end{tabular}

    \vspace{2pt}
    \begin{minipage}{0.98\linewidth}
    \footnotesize
    \textit{Notes.}
    The entries indicate guarantees established within the respective settings
    of each work and should not be interpreted as implication or dominance
    relations between the results.
    In particular, our convergence guarantees are obtained for regularized
    dynamics and therefore introduce bias relative to the unregularized Bellman
    solution. We quantify this regularization bias separately in
    Theorems~\ref{thm:policy_evaluation_error} and~\ref{thm:soft_q_error}.
    \end{minipage}
\end{table}

\paragraph{Additional related work.}
Beyond the mean-field setting, convergence of TD learning with nonlinear or neural function approximation has been studied under various structural and overparameterization assumptions. \citet{brandfonbrener:20} analyze the expected nonlinear TD dynamics through geometric properties of the function class and the underlying Markov chain. A separate line of work establishes finite-time guarantees for neural TD using overparameterization and control of the network around its initialization \citep{cai:19,cayci:23,ke:24}. These results provide complementary convergence guarantees outside the mean-field feature-learning regime considered here.

\section{Problem Setting}

\subsection{Notation}

Let $\mathcal P_2(\mathbb R^d)$ denote the space of probability measures on
$\mathbb R^d$ with finite second moment, and let
$\mathcal P_2^{\mathrm{ac}}(\mathbb R^d)$ denote its subset of measures
absolutely continuous with respect to Lebesgue measure. We write $W_2$
for the 2-Wasserstein distance. For $\nu_i(\mathrm{d}\theta)=q_i(\theta)\mathrm{d}\theta
\in \mathcal P_2^{\mathrm{ac}}(\mathbb R^d)$, $i=1,2$, we define the
differential entropy, the Kullback--Leibler divergence, and the relative
Fisher information by
$$
\mathcal H(\nu_1)
:= -\int q_1 \log q_1\,\mathrm{d}\theta,
\qquad
D_{\mathrm{KL}}(\nu_1\|\nu_2)
:= \int q_1 \log\frac{q_1}{q_2}\,\mathrm{d}\theta,
$$
and
$$
I(\nu_1\|\nu_2)
:= \int q_1
\left\|\nabla\log\frac{q_1}{q_2}\right\|^2 \mathrm{d}\theta.
$$
We identify an absolutely continuous probability measure with its density
whenever no confusion arises.

\subsection{Reinforcement Learning Setup}

We consider a discrete-time discounted Markov decision process
$(\mathcal S,\mathcal A,P,r,\gamma)$, where $\mathcal S$ and $\mathcal A$
are possibly infinite bounded subsets of Euclidean spaces.
Here $P$ is the transition kernel,
$r:\mathcal S\times\mathcal A\to[0,1]$ is the reward function,
and $\gamma\in(0,1)$ is the discount factor.
A policy $\pi:\mathcal S\to\Delta(\mathcal A)$ maps each state
$s\in\mathcal S$ to a distribution $\pi(\cdot|s)$ over $\mathcal A$.

We first consider the policy evaluation problem, where a policy $\pi$ is
fixed and the goal is to learn its value function $V^\pi$, namely the
expected infinite-horizon discounted return under $\pi$.
It is the unique solution of the Bellman equation
$$
V^\pi(s)
=
\mathbb E_{\substack{a\sim\pi(\cdot|s)\\
s'\sim P(\cdot|s,a)}}
\left[
r(s,a)+\gamma V^\pi(s')
\right].
$$

We later extend the same framework to entropy-regularized off-policy
control through soft Q-learning.

\subsection{Semi-Gradient TD}

Let $V(s;\vartheta)$ be a value-function approximator with parameter
$\vartheta$. Given a transition $(s_k,a_k,s_{k+1})$, semi-gradient TD
updates
$$
\vartheta^{(k+1)}
=
\vartheta^{(k)}
+
\alpha \delta_k
\nabla_\vartheta V(s_k;\vartheta^{(k)}),
$$
where
$$
\delta_k
=
r(s_k,a_k)
+
\gamma V(s_{k+1};\vartheta^{(k)})
-
V(s_k;\vartheta^{(k)}).
$$
Unlike gradient descent on the squared TD error, this update does not
differentiate through the bootstrapped term
$V(s_{k+1};\vartheta^{(k)})$. Consequently, its expected update field
need not be the gradient of any fixed scalar objective and may exhibit
oscillation or divergence under function approximation
\citep{baird:95,tsitsiklis:97,gordon:00}.

\subsection{Two-Layer Neural Networks and Mean-Field Formulation}
\label{subsec:two_layer}

We now specialize the value-function approximator to a two-layer neural
network and formulate its semi-gradient dynamics in the mean-field
parameterization. Let $\mathcal X$ denote the input space and
$\Theta=\mathbb R^d$ the parameter space of a single neuron. For policy
evaluation, $\mathcal X=\mathcal S$. We consider the feature map
$$
\phi(x;\theta)
=
\sigma_1(c)\sigma_2(w^\top x+b),
\qquad
\theta=(c,w,b)\in\Theta,
$$
where $\sigma_1,\sigma_2\in C^1$ have bounded values and derivatives.
Given particles
$\boldsymbol\theta=(\theta_1,\ldots,\theta_M)\in\Theta^M$, define the
two-layer network
$$
V_M(s;\boldsymbol\theta)
:=
\frac{1}{M}\sum_{i=1}^M \phi(s;\theta_i).
$$

Applying the semi-gradient TD update from the previous section to this
network gives, for each particle,
$$
\theta_i^{(k+1)}
=
\theta_i^{(k)}
+
\frac{\alpha}{M}\,
\delta_k
\nabla_\theta\phi(s_k;\theta_i^{(k)}),
$$
where
$$
\delta_k
=
r(s_k,a_k)
+
\gamma V_M(s_{k+1};\boldsymbol\theta^{(k)})
-
V_M(s_k;\boldsymbol\theta^{(k)}).
$$
Writing $\eta:=\alpha/M$, the particle motion is therefore the usual
semi-gradient TD update expressed in the two-layer parameterization.

To obtain the mean-field representation, introduce the empirical
parameter law
$$
\nu_M
:=
\frac{1}{M}\sum_{i=1}^M\delta_{\theta_i}.
$$
Then
$$
V_M(s;\boldsymbol\theta)
=
V(s;\nu_M),
\qquad
V(s;\nu)
:=
\int_\Theta
\phi(s;\theta)\,\nu(\mathrm d\theta),
$$
for any $\nu\in\mathcal P_2(\Theta)$. The mean-field formulation replaces
the finite empirical law $\nu_M$ by a general probability law
$\nu\in\mathcal P_2(\Theta)$.

To represent the semi-gradient structure at the level of probability
measures, we separate the law generating the bootstrap target from the
law being updated. Following the changing-objective viewpoint of
\citet{asadi:23}, for a target law $\mu$ and an online law $\nu$, define
$$
H(\mu,\nu)
:=
\frac{1}{2}
\mathbb E_{s,a,s'}
\left[
\left(
r(s,a)
+
\gamma V(s';\mu)
-
V(s;\nu)
\right)^2
\right].
$$
The expectation is taken with respect to the transition distribution
used by the algorithm. For fixed $\mu$, the functional $H(\mu,\cdot)$
is an ordinary regression objective toward the bootstrap target
generated by $\mu$. Semi-gradient TD differentiates only with respect
to the online law and evaluates the two arguments on the diagonal
$\mu=\nu$. Thus, while $H(\mu,\cdot)$ is a fixed objective when $\mu$
is held fixed, the objective itself moves along the actual
semi-gradient dynamics.

We study the regularized dynamics obtained by adding the quadratic
functional
$$
R(\nu)
:=
\frac{\lambda}{2}
\int_\Theta
\|\theta\|^2\,\nu(\mathrm d\theta),
\qquad
\lambda>0,
$$
and define
$$
E(\mu,\nu)
:=
H(\mu,\nu)+R(\nu).
$$
For a functional of a probability law, we write
$\delta/\delta\nu$ for its first variation with respect to $\nu$.
In the present setting,
$$
\frac{\delta H}{\delta\nu}(\mu,\nu)(\theta)
=
-
\mathbb E_{s,a,s'}
\left[
\left(
r(s,a)
+
\gamma V(s';\mu)
-
V(s;\nu)
\right)
\phi(s;\theta)
\right],
$$
and hence
$$
\frac{\delta E}{\delta\nu}(\mu,\nu)(\theta)
=
\frac{\delta H}{\delta\nu}(\mu,\nu)(\theta)
+
\frac{\lambda}{2}\|\theta\|^2.
$$
Taking the parameter gradient of the first variation and evaluating on
the diagonal $\mu=\nu$ recovers the expected particle-wise
semi-gradient direction, with the additional drift induced by the
quadratic regularization. The diagonal evaluation is the key
distinction from ordinary optimization of the fixed functional
$H(\mu,\cdot)$.

\subsection{Noisy Semi-Gradient Mean-Field Dynamics}
\label{subsec:noisy_dynamics}

We now add isotropic Gaussian noise to the population semi-gradient
dynamics. At the particle level, the resulting update is
\begin{equation}
\theta_i^{(k+1)}
=
\theta_i^{(k)}
-
\eta
\nabla_\theta
\frac{\delta E}{\delta\nu}
\bigl(
\nu_M^{(k)},\nu_M^{(k)}
\bigr)
\bigl(
\theta_i^{(k)}
\bigr)
+
\sqrt{2\tau\eta}\,\xi_i^{(k)},
\label{eq:noisy_particle_update}
\end{equation}
where $\tau>0$ controls the noise level,
$\xi_i^{(k)}\sim\mathcal N(0,I_d)$ are independent, and
$$
\nu_M^{(k)}
=
\frac{1}{M}\sum_{i=1}^M
\delta_{\theta_i^{(k)}}
$$
is the empirical parameter law. Here the TD drift is the expected
semi-gradient drift; the stochasticity in
\eqref{eq:noisy_particle_update} comes from the added Langevin
perturbation.

In continuous time, the corresponding mean-field dynamics are described
by the McKean--Vlasov diffusion
\begin{equation}
\mathrm d\theta_t
=
-
\nabla_\theta
\frac{\delta E}{\delta\nu}
(\nu_t,\nu_t)(\theta_t)\,\mathrm dt
+
\sqrt{2\tau}\,\mathrm dW_t,
\label{eq:mckean_vlasov}
\end{equation}
where $\nu_t$ denotes the law of $\theta_t$.
Under suitable regularity and growth conditions on the drift, standard
Fokker--Planck--Kolmogorov theory ensures propagation of finite second
moments and absolute continuity of the time-marginal laws
\citep{bogachev:15}. We therefore write
$$
\nu_t(\mathrm d\theta)
=
q_t(\theta)\,\mathrm d\theta,
\qquad
\nu_t\in\mathcal P_2^{\mathrm{ac}}(\mathbb R^d).
$$
The density $q_t$ evolves according to the nonlinear Fokker--Planck
equation
\begin{equation}
\partial_t q_t
=
\nabla_\theta\cdot
\left(
q_t
\nabla_\theta
\frac{\delta E}{\delta\nu}(q_t,q_t)
\right)
+
\tau\Delta_\theta q_t.
\label{eq:fokker_planck}
\end{equation}

Define the entropy-regularized frozen-target functional
$$
F(\mu,\nu)
:=
E(\mu,\nu)-\tau\mathcal H(\nu),
$$
where $\mathcal H$ denotes the differential entropy defined in
Section~2.1. Since
$$
\nabla_\theta
\frac{\delta(-\tau\mathcal H)}{\delta\nu}(\nu)(\theta)
=
\tau\nabla_\theta\log q(\theta)
$$
for $\nu(\mathrm d\theta)=q(\theta)\mathrm d\theta$, the
Fokker--Planck equation can equivalently be written as
\begin{equation}
\partial_t q_t
=
\nabla_\theta\cdot
\left(
q_t
\nabla_\theta
\frac{\delta F}{\delta\nu}(q_t,q_t)
\right).
\label{eq:fokker_planck_F}
\end{equation}

For a fixed target law $\mu$, the dynamics associated with
$F(\mu,\cdot)$ are ordinary mean-field Langevin dynamics and relax
toward the minimizer of this fixed functional. In the actual
semi-gradient dynamics, however, the target law is the current law
$q_t$ itself and therefore moves together with the online law.
Consequently, \eqref{eq:fokker_planck_F} has a gradient-flow-like form
but is not the Wasserstein gradient flow of the diagonal functional
$\nu\mapsto F(\nu,\nu)$, since the first variation is taken only with
respect to the second argument.

\section{Main Assumptions and Preliminary Results}

We first introduce the two ingredients that govern the convergence
analysis. The first controls the coupling between the target and online
laws in the frozen-target objective. The second quantifies the
relaxation properties of the corresponding Gibbs distributions.

\begin{assumption}[Mixed smoothness of the frozen-target objective]
\label{ass:mixed_smoothness}
There exists $L_p<\infty$ such that, for all admissible
$\mu,\nu_1,\nu_2\in\mathcal P_2(\mathbb R^d)$ and every
$\theta\in\mathbb R^d$,
$$
\left\|
\nabla_\theta
\left[
\frac{\delta H}{\delta\mu}(\mu,\nu_1)(\theta)
-
\frac{\delta H}{\delta\mu}(\mu,\nu_2)(\theta)
\right]
\right\|
\leq
L_p W_2(\nu_1,\nu_2).
$$
\end{assumption}

The constant $L_p$ measures the mixed sensitivity of the frozen-target
objective to the interaction between the target and online laws. For the
two-layer parameterization in Section~\ref{subsec:two_layer},
Assumption~\ref{ass:mixed_smoothness} follows from the boundedness of
the feature map and its first derivatives together with a finite second
moment of the input distribution.

For a fixed target law $\mu$, the functional $F(\mu,\cdot)$ is convex
in the probability law and includes both quadratic and entropy
regularization. Its minimizer therefore admits a Gibbs representation.

\begin{proposition}[Gibbs form of the frozen-target minimizer; cf. \citealp{hu:21}]
\label{prop:gibbs_minimizer}
For every $\mu\in\mathcal P_2(\mathbb R^d)$, the functional
$\nu\mapsto F(\mu,\nu)$ admits a unique minimizer
$\pi_{\mu,*}\in\mathcal P_2^{\mathrm{ac}}(\mathbb R^d)$.
Its density satisfies
$$
\pi_{\mu,*}(\theta)
\propto
\exp\left(
-\frac{1}{\tau}
\frac{\delta E}{\delta\nu}
(\mu,\pi_{\mu,*})(\theta)
\right).
$$
\end{proposition}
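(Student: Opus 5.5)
We fix $\mu\in\mathcal P_2(\mathbb R^d)$ and treat $\nu\mapsto F(\mu,\nu)=H(\mu,\nu)+R(\nu)-\tau\mathcal H(\nu)$ as a standard entropy-regularized mean-field objective. The plan follows the route of \citet{hu:21}: rewrite $F$ as a convex smooth part plus a relative entropy with respect to a Gaussian; obtain existence by lower semicontinuity and compactness; obtain uniqueness by strict convexity; and derive the Gibbs form from the first-order condition. Concretely, let $g_{\lambda,\tau}(\theta)\propto\exp(-\lambda\|\theta\|^2/(2\tau))$. A direct computation gives
$$
F(\mu,\nu)=H(\mu,\nu)+\tau D_{\mathrm{KL}}(\nu\|g_{\lambda,\tau})+\mathrm{const},
$$
valid for $\nu\in\mathcal P_2^{\mathrm{ac}}$, with $F=+\infty$ otherwise. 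Two properties of $H$ do the work. First, $V(s;\nu)=\int\phi(s;\theta)\,\nu(\mathrm d\theta)$ is linear in $\nu$, so $\nu\mapsto H(\mu,\nu)$ is a convex quadratic. Second, $\phi$ is bounded, so $H(\mu,\cdot)$ is bounded on all of $\mathcal P(\mathbb R^d)$.

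\emph{Existence.} Since $H\ge 0$ and $F$ is finite at $\nu=g_{\lambda,\tau}$, any minimizing sequence has bounded $D_{\mathrm{KL}}(\cdot\|g_{\lambda,\tau})$. Sublevel sets of relative entropy with respect to a Gaussian are weakly compact, and the Donsker--Varadhan inequality gives them uniformly bounded second moments. We extract a weakly convergent subsequence $\nu_n\to\bar\nu$. Relative entropy is weakly lower semicontinuous. Because $\phi(s;\cdot)$ is bounded and continuous, $V(s;\nu_n)\to V(s;\bar\nu)$ pointwise in $s$, and dominated convergence then gives $H(\mu,\nu_n)\to H(\mu,\bar\nu)$. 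Hence $\bar\nu$ is a minimizer. It is absolutely continuous because its KL divergence is finite, and it lies in $\mathcal P_2$ by the moment bound.

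\emph{Uniqueness.} $H(\mu,\cdot)$ is convex along linear interpolations and $D_{\mathrm{KL}}(\cdot\|g_{\lambda,\tau})$ is strictly convex along them. Two distinct minimizers would therefore have a midpoint with strictly smaller $F$, a contradiction.

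\emph{Gibbs form.} This is the main obstacle, because we must justify the first-order condition on the whole of $\mathbb R^d$ and not only on the support of $\pi_{\mu,*}$. Write $\pi_*:=\pi_{\mu,*}$ and perturb along $\nu_\epsilon=(1-\epsilon)\pi_*+\epsilon\nu'$ for $\nu'$ with finite KL divergence. The quadratic part has derivative $\int\frac{\delta H}{\delta\nu}(\mu,\pi_*)\,\mathrm d(\nu'-\pi_*)$. Convexity of $x\log x$ together with monotone convergence handles the entropy part. Optimality then yields
$$
\int\Bigl(\tfrac{\delta E}{\delta\nu}(\mu,\pi_*)+\tau\log\pi_*\Bigr)\,\mathrm d(\nu'-\pi_*)\ge 0 .
$$

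First take $\nu'$ supported on a set where $\pi_*=0$. The term $\tau\log\pi_*=-\infty$ there makes the derivative $-\infty$, which contradicts optimality. So $\pi_*>0$ almost everywhere. Next, arbitrariness of $\nu'$ forces $\frac{\delta E}{\delta\nu}(\mu,\pi_*)+\tau\log\pi_*$ to equal a constant a.e. Exponentiating and normalizing gives
$$
\pi_*\propto\exp\bigl(-\tau^{-1}\tfrac{\delta E}{\delta\nu}(\mu,\pi_*)\bigr).
$$
This is a well-defined probability density: $\frac{\delta H}{\delta\nu}$ is bounded, so the exponent equals $-\lambda\|\theta\|^2/(2\tau)$ up to a bounded term, which gives integrability and a finite second moment. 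This self-consistent equation is exactly the stated characterization.
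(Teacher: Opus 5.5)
Your proposal is correct and follows the argument the paper relies on. The paper gives no proof of its own and defers to Hu et al.\ (2021), whose argument runs as yours does: rewrite $R-\tau\mathcal H$ as $\tau D_{\mathrm{KL}}(\cdot\|g_{\lambda,\tau})$ up to a constant, get existence from compactness and lower semicontinuity of relative-entropy sublevel sets, uniqueness from strict convexity, and the Gibbs form from the first-order condition after showing $\pi_*>0$ a.e.

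The one step you leave terse is passing from $\int h\,\mathrm d\nu'\ge c:=\int h\,\mathrm d\pi_*$ to $h$ being a.e.\ constant. It is standard: testing with normalized Lebesgue measure on bounded pieces of $\{h<c-\delta\}$ gives $h\ge c$ a.e., and then $\int(h-c)\,\mathrm d\pi_*=0$ together with $\pi_*>0$ a.e.\ gives $h=c$ a.e.
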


More generally, for any
$\mu,\nu\in\mathcal P_2(\mathbb R^d)$, define the associated Gibbs law
$\pi_{\mu,\nu}$ by
$$
\pi_{\mu,\nu}(\mathrm d\theta)
=
\frac{1}{Z_{\mu,\nu}}
\exp\left(
-\frac{1}{\tau}
\frac{\delta E}{\delta\nu}(\mu,\nu)(\theta)
\right)
\mathrm d\theta,
$$
where $Z_{\mu,\nu}$ is the normalizing constant.
The quadratic regularizer together with the bounded perturbation induced
by $H$ ensures that $\pi_{\mu,\nu}$ is well defined and belongs to
$\mathcal P_2^{\mathrm{ac}}(\mathbb R^d)$.

The second ingredient is a uniform functional inequality for this Gibbs
family.

\begin{assumption}[Uniform log-Sobolev inequality]
\label{ass:lsi}
There exists $\rho>0$ such that, uniformly over all admissible
$\mu,\nu\in\mathcal P_2(\mathbb R^d)$, the Gibbs law
$\pi_{\mu,\nu}$ satisfies
$$
D_{\mathrm{KL}}(m\|\pi_{\mu,\nu})
\leq
\frac{1}{2\rho}
I(m\|\pi_{\mu,\nu})
$$
for every
$m\in\mathcal P_2^{\mathrm{ac}}(\mathbb R^d)$.
\end{assumption}

Our convergence results use Assumption~\ref{ass:lsi} only through the
effective relaxation strength $\rho\tau$ and do not require any specific
dependence of $\rho$ on the temperature $\tau$. Sufficient conditions for Assumption~\ref{ass:lsi} are given in
Appendix~\ref{app:lsi_verification}.

\begin{proposition}[Entropy sandwich; cf. \citealp{chizat:22,nitanda:22}]
\label{prop:entropy_sandwich}
For every $\mu\in\mathcal P_2(\mathbb R^d)$ and
$\nu\in\mathcal P_2^{\mathrm{ac}}(\mathbb R^d)$,
$$
\tau
D_{\mathrm{KL}}(\nu\|\pi_{\mu,*})
\leq
F(\mu,\nu)-F(\mu,\pi_{\mu,*})
\leq
\tau
D_{\mathrm{KL}}(\nu\|\pi_{\mu,\nu}).
$$
\end{proposition}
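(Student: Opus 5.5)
Fix $\mu\in\mathcal P_2(\mathbb R^d)$ and write $F_\mu(\cdot):=F(\mu,\cdot)$, $E_\mu:=E(\mu,\cdot)$, and $\pi_*:=\pi_{\mu,*}$. Everything reduces to the convexity of $\nu\mapsto E_\mu(\nu)$ along linear interpolations $\nu_s=(1-s)\nu+s\nu'$. The map $\nu\mapsto V(\cdot;\nu)$ is linear, so $H(\mu,\cdot)$ is a convex quadratic. The functional $R$ is linear in $\nu$. Hence for any $\nu,\nu'$ with finite second moment we have the tangent inequality
$$
E_\mu(\nu')\geq E_\mu(\nu)+\int \frac{\delta E}{\delta\nu}(\mu,\nu)(\theta)\,(\nu'-\nu)(\mathrm d\theta).
$$
I will combine this with an exact algebraic identity expressing $\tau\mathcal H$ through the log-density of a Gibbs law.

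\emph{Lower bound.} The key identity is the following. For any Gibbs law $\pi_{\mu,m}\propto \exp(-\tau^{-1}\delta E/\delta\nu(\mu,m))$ and any $\nu\in\mathcal P_2^{\mathrm{ac}}$,
$$
-\tau\mathcal H(\nu)=\tau D_{\mathrm{KL}}(\nu\|\pi_{\mu,m})-\int \frac{\delta E}{\delta\nu}(\mu,m)\,\mathrm d\nu-\tau\log Z_{\mu,m}.
$$
This holds because $\log \pi_{\mu,m}=-\tau^{-1}\frac{\delta E}{\delta\nu}(\mu,m)-\log Z_{\mu,m}$. Apply it with $m=\pi_*$; by Proposition~\ref{prop:gibbs_minimizer}, $\pi_{\mu,\pi_*}=\pi_*$. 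Subtracting the identity at $\nu$ from the identity at $\pi_*$ (where the KL term vanishes) gives
$$
F_\mu(\nu)-F_\mu(\pi_*)=E_\mu(\nu)-E_\mu(\pi_*)-\int\frac{\delta E}{\delta\nu}(\mu,\pi_*)\,\mathrm d(\nu-\pi_*)+\tau D_{\mathrm{KL}}(\nu\|\pi_*).
$$
The first three terms are nonnegative by the tangent inequality taken at $\pi_*$. This yields the left inequality.

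\emph{Upper bound.} Use the same identity with $m=\nu$, applied at both $\nu$ and $\pi_*$. At $\pi_*$ drop the nonnegative term $\tau D_{\mathrm{KL}}(\pi_*\|\pi_{\mu,\nu})$. This gives
$$
F_\mu(\nu)-F_\mu(\pi_*)\leq E_\mu(\nu)-E_\mu(\pi_*)+\int\frac{\delta E}{\delta\nu}(\mu,\nu)\,\mathrm d(\pi_*-\nu)+\tau D_{\mathrm{KL}}(\nu\|\pi_{\mu,\nu}).
$$
By the tangent inequality taken at $\nu$ in the direction $\pi_*$, the first three terms are $\leq 0$. This gives the right inequality.

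\emph{Integrability.} The main obstacle is justifying that every term is finite, so the manipulations are legitimate. If $D_{\mathrm{KL}}(\nu\|\pi_{\mu,\nu})=\infty$, the upper bound is trivial. Otherwise, $\frac{\delta H}{\delta\nu}$ is bounded because $\phi$ and $r$ are bounded and $V(\cdot;\mu)$ is bounded. Hence $\frac{\delta E}{\delta\nu}$ is a quadratic plus a bounded function, and it is integrable against any measure in $\mathcal P_2$. The normalizer $Z_{\mu,m}$ is finite and positive. Then $\mathcal H(\nu)$ is finite precisely when $D_{\mathrm{KL}}(\nu\|\pi_{\mu,m})$ is finite, so each step holds in the extended reals. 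If $F_\mu(\nu)=+\infty$, the lower bound is trivial and the upper bound forces the right side to be infinite.

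For the tangent inequality I will verify directly that $s\mapsto E_\mu(\nu_s)$ is a convex quadratic polynomial in $s$. Its derivative at $s=0$ is $\int \frac{\delta E}{\delta\nu}(\mu,\nu)\,\mathrm d(\nu'-\nu)$, which follows by differentiating under the expectation using boundedness of $\phi$.
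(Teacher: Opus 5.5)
Your argument is correct and is essentially the standard proof from the works the paper cites for this proposition (the paper gives no separate proof of its own). That proof combines linear convexity of $\nu\mapsto E(\mu,\nu)$ with the log-density identity for Gibbs laws, used with $\pi_{\mu,\pi_{\mu,*}}=\pi_{\mu,*}$ for the lower bound and with $\pi_{\mu,\nu}$, after dropping $\tau D_{\mathrm{KL}}(\pi_{\mu,*}\|\pi_{\mu,\nu})\geq 0$, for the upper bound; your integrability remarks (bounded $\delta H/\delta\nu$, quadratic confinement, extended-real handling when the entropy is infinite) also cover the details the paper leaves implicit.
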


Together with Assumption~\ref{ass:lsi}, the entropy-sandwich inequality relates the rate of decrease along the frozen-target dynamics to the frozen-target optimality gap.

\section{Main Results}
\label{sec:main_results}

\subsection{Policy Evaluation}
\label{subsec:policy_evaluation_main}

We first establish convergence of the noisy mean-field semi-gradient
dynamics in the policy-evaluation setting. Throughout this section, we
write
$$
\nu_t(\mathrm d\theta)
=
q_t(\theta)\mathrm d\theta
$$
for the parameter law evolving according to
\eqref{eq:fokker_planck_F}. For each target law $\mu$, define the Gibbs response map
$$
T(\mu)
:=
\pi_{\mu,*}(\theta)\mathrm d\theta,
$$
where $\pi_{\mu,*}$ is the unique minimizer of
$\nu\mapsto F(\mu,\nu)$.

A direct analysis of the diagonal functional
$F(\nu_t,\nu_t)$ does not yield a closed estimate because the bootstrap
target moves together with the online law. We instead consider the
frozen-target optimality gap
$$
G(t)
:=
F(\nu_t,\nu_t)
-
F(\nu_t,T(\nu_t)).
$$
The first lemma shows that this gap decreases exponentially whenever
the relaxation rate $\rho\tau$ dominates the sensitivity $L_p$ of
the moving target.

\begin{lemma}[Decay of the frozen-target optimality gap]
\label{lem:gap_decay}
Suppose that
$\nu_0\in\mathcal P_2^{\mathrm{ac}}(\mathbb R^d)$ satisfies
$F(\nu_0,\nu_0)<\infty$, and let $(\nu_t)_{t\geq0}$ be a
sufficiently regular solution to \eqref{eq:fokker_planck_F}.
Under Assumptions~\ref{ass:mixed_smoothness} and~\ref{ass:lsi},
suppose further that $\rho\tau>L_p$.
Then
$$
\frac{d}{dt}G(t)
\leq
-2(\rho\tau-L_p)G(t).
$$
Consequently,
$$
G(t)
\leq
e^{-2(\rho\tau-L_p)t}G(0)
$$
for every $t\geq0$.
\end{lemma}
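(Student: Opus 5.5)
Write $G(t)=F(\nu_t,\nu_t)-\Phi(\nu_t)$ with the value function $\Phi(\mu):=\min_\nu F(\mu,\nu)=F(\mu,T(\mu))$. The plan is to differentiate both pieces along \eqref{eq:fokker_planck_F}, so that the mismatch caused by the moving target shows up only as a mixed first-variation term. That term can then be controlled by Assumption~\ref{ass:mixed_smoothness}, while the relaxation term is handled by Assumption~\ref{ass:lsi} and Proposition~\ref{prop:entropy_sandwich}.

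\emph{Step 1 (time derivative).} Write the velocity field as $v_t:=-\nabla_\theta\frac{\delta F}{\delta\nu}(\nu_t,\nu_t)$. The chain rule for $F(\nu_t,\nu_t)$ in both arguments gives
$$
\frac{d}{dt}F(\nu_t,\nu_t)=-\int\Bigl\|\nabla\tfrac{\delta F}{\delta\nu}(\nu_t,\nu_t)\Bigr\|^2 d\nu_t-\int\Bigl\langle\nabla\tfrac{\delta H}{\delta\mu}(\nu_t,\nu_t),\nabla\tfrac{\delta F}{\delta\nu}(\nu_t,\nu_t)\Bigr\rangle d\nu_t .
$$
This uses that only $H$ depends on $\mu$. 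For $\Phi$, an envelope (Danskin) argument applies, since $T(\mu)$ is the unique minimizer by Proposition~\ref{prop:gibbs_minimizer}. It gives $\frac{\delta\Phi}{\delta\mu}(\mu)=\frac{\delta H}{\delta\mu}(\mu,T(\mu))$, and hence
$$
\frac{d}{dt}\Phi(\nu_t)=-\int\Bigl\langle\nabla\tfrac{\delta H}{\delta\mu}(\nu_t,T(\nu_t)),\nabla\tfrac{\delta F}{\delta\nu}(\nu_t,\nu_t)\Bigr\rangle d\nu_t .
$$
Subtracting the two, with $g:=\nabla\frac{\delta F}{\delta\nu}(\nu_t,\nu_t)$ and $\Delta:=\nabla[\frac{\delta H}{\delta\mu}(\nu_t,\nu_t)-\frac{\delta H}{\delta\mu}(\nu_t,T(\nu_t))]$, we get
$$
G'(t)=-\int\|g\|^2d\nu_t-\int\langle\Delta,g\rangle d\nu_t .
$$

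\emph{Step 2 (relaxation term).} Since $g=\tau\nabla\log(\nu_t/\pi_{\nu_t,\nu_t})$, the first term equals $\tau^2 I(\nu_t\|\pi_{\nu_t,\nu_t})$. Assumption~\ref{ass:lsi} and the upper half of Proposition~\ref{prop:entropy_sandwich} then give
$$
\int\|g\|^2d\nu_t\ge 2\rho\tau^2 D_{\mathrm{KL}}(\nu_t\|\pi_{\nu_t,\nu_t})\ge 2\rho\tau\,G(t).
$$

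\emph{Step 3 (moving-target term; the main obstacle).} By Cauchy--Schwarz and Assumption~\ref{ass:mixed_smoothness}, $|\int\langle\Delta,g\rangle d\nu_t|\le L_pW_2(\nu_t,T(\nu_t))\,\|g\|_{L^2(\nu_t)}$. The lower half of the sandwich, combined with Talagrand's inequality for $\pi_{\nu_t,*}=\pi_{\nu_t,T(\nu_t)}$ (which follows from Assumption~\ref{ass:lsi} via Otto--Villani), gives
$$
W_2^2(\nu_t,T(\nu_t))\le \tfrac{2}{\rho}D_{\mathrm{KL}}(\nu_t\|T(\nu_t))\le \tfrac{2}{\rho\tau}G .
$$
The delicate point is balancing $\|g\|$ against $G$ so that the constant comes out exactly $2(\rho\tau-L_p)$. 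Writing $A:=\|g\|_{L^2(\nu_t)}$, we have
$$
G'\le -A^2+L_p A\sqrt{2G/(\rho\tau)} .
$$
Using $A\ge\sqrt{2\rho\tau G}$, the map $A\mapsto -A^2+cA$ is decreasing on this range whenever $c\le 2\sqrt{2\rho\tau G}$, which holds since $L_p<\rho\tau$. Evaluating at $A=\sqrt{2\rho\tau G}$ yields
$$
G'\le -2\rho\tau G+2L_pG=-2(\rho\tau-L_p)G .
$$
I expect some care to be needed in justifying the envelope formula and the differentiation. That is where ``sufficiently regular'' enters; the argument needs continuity of $\mu\mapsto T(\mu)$ and finite entropy along the flow.

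\emph{Step 4.} Grönwall's inequality gives $G(t)\le e^{-2(\rho\tau-L_p)t}G(0)$, and $G(0)<\infty$ because $F(\nu_0,\nu_0)<\infty$.
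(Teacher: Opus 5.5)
Your proposal is correct and follows essentially the same route as the paper: the chain-rule/envelope decomposition $G'=-\|g\|^2-\langle\Delta,g\rangle$, the bound $\|g\|^2\ge 2\rho\tau G$ from the LSI plus the upper sandwich, and $W_2^2(\nu_t,T(\nu_t))\le 2G/(\rho\tau)$ from Talagrand plus the lower sandwich. The only difference is the final balancing step, where the paper bounds $\|v_t^{\mathrm{mov}}\|\le (L_p/\rho\tau)\|v_t^{\mathrm{rel}}\|$ to get $G'\le-(1-L_p/\rho\tau)\|v_t^{\mathrm{rel}}\|^2$, while you use monotonicity of $A\mapsto -A^2+cA$; both yield the same rate $2(\rho\tau-L_p)$.
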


Lemma~\ref{lem:gap_decay} shows that the current parameter law
approaches its instantaneous frozen-target minimizer. This alone does
not imply convergence to a common limit because the minimizer itself
changes with the target law. The next lemma controls this dependence.

\begin{lemma}[Stability of the Gibbs response map]
\label{lem:response_contraction}
Under Assumptions~\ref{ass:mixed_smoothness} and~\ref{ass:lsi}, for
every $\mu_1,\mu_2\in\mathcal P_2(\mathbb R^d)$,
$$
W_2\bigl(T(\mu_1),T(\mu_2)\bigr)
\leq
\frac{L_p}{\rho\tau}
W_2(\mu_1,\mu_2).
$$
In particular, if $\rho\tau>L_p$, then $T$ is a contraction on
$\bigl(\mathcal P_2(\mathbb R^d),W_2\bigr)$.
\end{lemma}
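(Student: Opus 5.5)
To prove Lemma~\ref{lem:response_contraction}, I will combine the Gibbs characterization of Proposition~\ref{prop:gibbs_minimizer} with Talagrand's transport inequality and the log-Sobolev inequality of Assumption~\ref{ass:lsi}. Write $\pi_i:=T(\mu_i)=\pi_{\mu_i,*}$ for $i=1,2$. By Proposition~\ref{prop:gibbs_minimizer}, each $\pi_i$ is a fixed point of its own Gibbs map, $\pi_i=\pi_{\mu_i,\pi_i}$. Hence Assumption~\ref{ass:lsi} applies to $\pi_2$ itself. Since LSI with constant $\rho$ implies Talagrand's $T_2$ inequality (Otto--Villani), we get
$$
W_2(\pi_1,\pi_2)^2\le \frac{2}{\rho}D_{\mathrm{KL}}(\pi_1\|\pi_2)\le \frac{1}{\rho^2}I(\pi_1\|\pi_2).
$$
It therefore suffices to bound the relative Fisher information $I(\pi_1\|\pi_2)$.

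Using the Gibbs forms, $\nabla\log(\pi_1/\pi_2)=-\frac{1}{\tau}\nabla_\theta\bigl[\frac{\delta E}{\delta\nu}(\mu_1,\pi_1)-\frac{\delta E}{\delta\nu}(\mu_2,\pi_2)\bigr]$. The quadratic term $\frac{\lambda}{2}\|\theta\|^2$ cancels, leaving only the $H$-part. I split this via the intermediate $\frac{\delta H}{\delta\nu}(\mu_2,\pi_1)$ into two pieces.
\begin{itemize}
\item[(a)] The target-difference piece is $\nabla_\theta[\frac{\delta H}{\delta\nu}(\mu_1,\pi_1)-\frac{\delta H}{\delta\nu}(\mu_2,\pi_1)]$. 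I aim to bound it pointwise by $L_p W_2(\mu_1,\mu_2)$ using Assumption~\ref{ass:mixed_smoothness}. Since $H$ is quadratic, this difference equals $-\gamma\,\mathbb E[(V(s';\mu_1)-V(s';\mu_2))\nabla\phi(s;\theta)]$, which is independent of $\pi_1$. It is exactly the mixed (cross) derivative of $H$, which Assumption~\ref{ass:mixed_smoothness} controls. The assumption is written with the roles of the variations transposed, so I will need to use the symmetry of mixed second variations of the bilinear cross term $-\gamma\mathbb E[V(s';\mu)V(s;\nu)]$.
\item[(b)] The online-difference piece is $\nabla_\theta[\frac{\delta H}{\delta\nu}(\mu_2,\pi_1)-\frac{\delta H}{\delta\nu}(\mu_2,\pi_2)]=\mathbb E[(V(s;\pi_1)-V(s;\pi_2))\nabla\phi(s;\theta)]$.
\end{itemize}

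Piece (b) is the main obstacle: it is not small a priori, and naively it yields a term $C\,W_2(\pi_1,\pi_2)$ that would have to be absorbed. I would instead avoid the crude split and exploit convexity of $F(\mu,\cdot)$, through a first-order optimality / monotonicity argument. Use the variational inequalities $F(\mu_1,\pi_2)-F(\mu_1,\pi_1)\ge\tau D_{\mathrm{KL}}(\pi_2\|\pi_1)$ and the symmetric one for $\mu_2$, both from Proposition~\ref{prop:entropy_sandwich}. Summing them, the $R$ and entropy contributions cancel. The online-quadratic part of $H$ contributes $-\frac12\mathbb E[(V(s;\pi_1)-V(s;\pi_2))^2]\le 0$ (monotonicity), so it can be dropped. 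What remains is
$$
\tau\bigl(D_{\mathrm{KL}}(\pi_1\|\pi_2)+D_{\mathrm{KL}}(\pi_2\|\pi_1)\bigr)\le \int\Bigl[\tfrac{\delta H}{\delta\nu}(\mu_1,\cdot)-\tfrac{\delta H}{\delta\nu}(\mu_2,\cdot)\Bigr]\,d(\pi_2-\pi_1),
$$
where the integrand is the $\nu$-independent target-difference function from piece (a). By Kantorovich duality, specifically the $W_1\le W_2$ bound for a Lipschitz test function, the right side is at most $L_pW_2(\mu_1,\mu_2)\,W_2(\pi_1,\pi_2)$.

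On the left, Talagrand's inequality applied to both $\pi_1$ and $\pi_2$ gives $D_{\mathrm{KL}}(\pi_1\|\pi_2)+D_{\mathrm{KL}}(\pi_2\|\pi_1)\ge\rho W_2(\pi_1,\pi_2)^2$. Combining, $\rho\tau W_2(\pi_1,\pi_2)^2\le L_pW_2(\mu_1,\mu_2)W_2(\pi_1,\pi_2)$, and dividing by $W_2(\pi_1,\pi_2)$ yields the claim. The contraction statement then follows immediately when $\rho\tau>L_p$, and completeness of $(\mathcal P_2,W_2)$ gives the fixed point via Banach's theorem.

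The steps needing care are three: justifying the sandwich-based inequalities (finiteness of $F(\mu_i,\pi_j)$, which holds since the $\pi_j$ have Gaussian tails); the identification of the Lipschitz constant of the target-difference function with $L_p$ from Assumption~\ref{ass:mixed_smoothness}; and the cancellation of the $R$ and entropy terms in the summed inequalities.
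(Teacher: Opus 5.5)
Your final argument has the same skeleton as the paper's proof. You add the two lower entropy-sandwich bounds so that $R$ and the entropy cancel, bound the surviving cross term by Kantorovich--Rubinstein duality, lower-bound the symmetrized KL by Talagrand in both directions (legitimate because $T(\mu_i)=\pi_{\mu_i,T(\mu_i)}$), and divide. One harmless inaccuracy: there is no nonpositive online term to drop. $H(\mu,\nu)$ splits into a $\mu$-only part, a $\nu$-only part, and the bilinear cross term $-\gamma\,\mathbb E[V(s';\mu)V(s;\nu)]$. The double difference therefore equals exactly $\gamma\,\mathbb E[(V(s';\mu_2)-V(s';\mu_1))(V(s;\pi_2)-V(s;\pi_1))]$, which is already your right-hand side.

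The step that does not go through as planned is the Lipschitz bound. You apply duality against $\pi_2-\pi_1$ with test function $g(\theta)=-\gamma\,\mathbb E[(V(s';\mu_1)-V(s';\mu_2))\phi(s;\theta)]$, so you need $\|\nabla g\|\le L_pW_2(\mu_1,\mu_2)$. Assumption~\ref{ass:mixed_smoothness} instead bounds the gradient of $\frac{\delta H}{\delta\mu}(\mu,\nu_1)(\theta)-\frac{\delta H}{\delta\mu}(\mu,\nu_2)(\theta)=\gamma\,\mathbb E[(V(s;\nu_2)-V(s;\nu_1))\phi(s';\theta)]$. There the differentiated neuron sits at $s'$ and the law difference enters through $V(s;\cdot)$.

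Symmetry of mixed second variations does not convert one bound into the other. The mixed kernel $K(\theta,\theta')=-\gamma\,\mathbb E[\phi(s';\theta)\phi(s;\theta')]$, with $\theta$ the target-slot variable, is in general not symmetric in $(\theta,\theta')$ because $s$ and $s'$ play different roles. The assumption controls $\theta$-gradients only in the target slot.

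The fix is to integrate the bilinear form in the other order, as the paper does. Write the double difference as $\int[\frac{\delta H}{\delta\mu}(\mu,\pi_1)-\frac{\delta H}{\delta\mu}(\mu,\pi_2)](\theta)\,(\mu_2-\mu_1)(d\theta)$. The paper reaches this via the interpolation $\mu_s=(1-s)\mu_1+s\mu_2$; for this quadratic $H$ the bracket does not depend on $\mu$. Then Assumption~\ref{ass:mixed_smoothness} applies verbatim, the bracket is $L_pW_2(\pi_1,\pi_2)$-Lipschitz, and duality against $\mu_2-\mu_1$ yields the same bound $L_pW_2(\mu_1,\mu_2)W_2(\pi_1,\pi_2)$.

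For the concrete two-layer network your ordering would also work, since both orderings are bounded by $\gamma\,\mathbb E[L_\phi(s)L_\phi(s')]$. It does not work under Assumption~\ref{ass:mixed_smoothness} as an abstract hypothesis.
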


Since $\bigl(\mathcal P_2(\mathbb R^d),W_2\bigr)$ is complete,
Lemma~\ref{lem:response_contraction} and Banach's fixed-point theorem
imply that, under $\rho\tau>L_p$, the response map $T$ admits a unique
fixed point. We denote it by $\nu_\star$, so that
$$
\nu_\star
=
T(\nu_\star).
$$
The two preceding lemmas now yield convergence of the full
semi-gradient dynamics.

\begin{theorem}[Exponential convergence to the self-consistent equilibrium]
\label{thm:policy_evaluation_convergence}
Suppose that the conditions of Lemma~\ref{lem:gap_decay} hold and that
$$
\rho\tau>L_p.
$$
Then $(\nu_t)_{t\geq0}$ converges exponentially in $W_2$ to the unique
self-consistent equilibrium $\nu_\star$. More precisely, for every
$t\geq0$,
$$
W_2(\nu_t,\nu_\star)
\leq
\frac{\sqrt{2\rho\tau}}{\rho\tau-L_p}
\sqrt{G(0)}
e^{-(\rho\tau-L_p)t}.
$$
\end{theorem}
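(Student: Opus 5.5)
The plan is to split $W_2(\nu_t,\nu_\star)$ with the triangle inequality through the instantaneous frozen-target minimizer $T(\nu_t)$, and then use the fixed-point property $\nu_\star=T(\nu_\star)$:
$$
W_2(\nu_t,\nu_\star)\le W_2(\nu_t,T(\nu_t))+W_2\bigl(T(\nu_t),T(\nu_\star)\bigr)
\le W_2(\nu_t,T(\nu_t))+\frac{L_p}{\rho\tau}W_2(\nu_t,\nu_\star),
$$
where the second term is controlled by Lemma~\ref{lem:response_contraction}. Since $\rho\tau>L_p$, the contraction term can be absorbed into the left-hand side. This gives
$$
W_2(\nu_t,\nu_\star)\le \frac{\rho\tau}{\rho\tau-L_p}\,W_2(\nu_t,T(\nu_t)).
$$
The problem therefore reduces to bounding the distance from $\nu_t$ to its own frozen-target minimizer.

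For that distance I will use Talagrand's transport inequality. Assumption~\ref{ass:lsi} gives a log-Sobolev inequality with constant $\rho$ for the Gibbs family. The minimizer $\pi_{\mu,*}=\pi_{\mu,\pi_{\mu,*}}$ is itself a member of this family, by the fixed-point representation in Proposition~\ref{prop:gibbs_minimizer}. By Otto--Villani, an LSI with constant $\rho$ implies $W_2^2(m,\pi)\le \frac{2}{\rho}D_{\mathrm{KL}}(m\|\pi)$. Applying this with $\mu=\nu_t$ and $m=\nu_t$, and then the lower half of the entropy sandwich (Proposition~\ref{prop:entropy_sandwich}), gives
$$
W_2^2(\nu_t,T(\nu_t))\le \frac{2}{\rho}D_{\mathrm{KL}}(\nu_t\|T(\nu_t))\le \frac{2}{\rho\tau}G(t).
$$
Lemma~\ref{lem:gap_decay} then supplies $G(t)\le e^{-2(\rho\tau-L_p)t}G(0)$. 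Hence $W_2(\nu_t,T(\nu_t))\le\sqrt{2/(\rho\tau)}\,\sqrt{G(0)}\,e^{-(\rho\tau-L_p)t}$.

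Combining the two steps gives the prefactor $\frac{\rho\tau}{\rho\tau-L_p}\cdot\sqrt{\frac{2}{\rho\tau}}=\frac{\sqrt{2\rho\tau}}{\rho\tau-L_p}$, which is exactly the stated bound. The main point to verify is the Talagrand step. One must confirm that $\nu_t\in\mathcal P_2^{\mathrm{ac}}$, which holds because the solution is assumed sufficiently regular. One must also confirm that the LSI of Assumption~\ref{ass:lsi} applies to $\pi_{\nu_t,*}$. This holds because $\pi_{\nu_t,*}$ is the Gibbs law $\pi_{\mu,\nu}$ with $\mu=\nu_t$ and $\nu=\pi_{\nu_t,*}$, and that pair is admissible. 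The Otto--Villani theorem itself can be cited as standard. The absorption step is elementary once $W_2(\nu_t,\nu_\star)<\infty$, which holds because both laws lie in $\mathcal P_2$.
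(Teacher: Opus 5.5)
Your proposal is correct and follows the paper's proof essentially step for step. Both arguments apply the triangle inequality through $T(\nu_t)$ and absorb the contraction term from Lemma~\ref{lem:response_contraction}, then use Talagrand's inequality for $T(\nu_t)$, the lower entropy-sandwich bound, and Lemma~\ref{lem:gap_decay} to reach the constant $\sqrt{2\rho\tau}/(\rho\tau-L_p)$. Your remark that $\pi_{\nu_t,*}=\pi_{\nu_t,\pi_{\nu_t,*}}$ lies in the Gibbs family covered by Assumption~\ref{ass:lsi} makes explicit a point the paper uses only implicitly.
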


Lemma~\ref{lem:gap_decay} controls relaxation toward the instantaneous
frozen-target minimizer, while
Lemma~\ref{lem:response_contraction} controls the sensitivity of that
minimizer to changes in the bootstrap target. The condition
$\rho\tau>L_p$ ensures that the relaxation is strong enough to dominate
the moving-target effect.

\subsection{Approximation and Regularization Error}
\label{subsec:policy_evaluation_error}

Theorem~\ref{thm:policy_evaluation_convergence} establishes convergence
to a unique self-consistent equilibrium. Since this equilibrium is
defined through an entropy- and quadratically-regularized objective,
it need not represent the unregularized value function exactly. We next
quantify this error.

Let
$$
g_{\lambda,\tau}
:=
\mathcal N\left(0,\frac{\tau}{\lambda}I_d\right).
$$
For any absolutely continuous parameter law $\nu$,
$$
R(\nu)-\tau\mathcal H(\nu)
=
\tau D_{\mathrm{KL}}(\nu\|g_{\lambda,\tau})
-
\frac{\tau d}{2}
\log\frac{2\pi\tau}{\lambda}.
$$
We write $V_\nu(s):=V(s;\nu)$. For any measurable function $f$ on $\mathcal S$, define the Bellman
operator of the evaluated policy by
$$
(T^\pi f)(s)
:=
\mathbb E_{\substack{
a\sim\pi(\cdot\mid s)\\
s'\sim P(\cdot\mid s,a)
}}
\left[
r(s,a)+\gamma f(s')
\right].
$$

\begin{theorem}[Approximation and regularization error]
\label{thm:policy_evaluation_error}
Let $P^\pi$ be the transition operator of the evaluated policy and let
$d^\pi$ be a stationary state distribution for $P^\pi$. Suppose that
the expectation defining the frozen-target objective $H$ is taken under
the transition distribution $d^\pi(ds)\,\pi(da\mid s)\,P(ds'\mid s,a)$.
Let $\nu_\star$ be a self-consistent equilibrium satisfying
$$
\nu_\star
\in
\operatorname*{argmin}_{\nu}
F(\nu_\star,\nu).
$$
Then
$$
\|V_{\nu_\star}-V^\pi\|_{L^2(d^\pi)}
\leq
\inf_{\nu}
\left\{
\frac{1+\gamma}{1-\gamma}
\|V_\nu-V^\pi\|_{L^2(d^\pi)}
+
\sqrt{
\frac{\tau}{1-\gamma}
D_{\mathrm{KL}}(\nu\|g_{\lambda,\tau})
}
\right\},
$$
where the infimum is over parameter laws with finite
$D_{\mathrm{KL}}(\nu\|g_{\lambda,\tau})$.

In particular, suppose that there exists a parameter law $\nu^\pi$
satisfying
$$
V_{\nu^\pi}=V^\pi,
\qquad
D_{\mathrm{KL}}(\nu^\pi\|g_{\lambda,\tau})<\infty.
$$
Then
$$
\|V_{\nu_\star}-V^\pi\|_{L^2(d^\pi)}^2
\leq
\frac{\tau}{1-\gamma}
\left[
D_{\mathrm{KL}}(\nu^\pi\|g_{\lambda,\tau})
-
D_{\mathrm{KL}}(\nu_\star\|g_{\lambda,\tau})
\right]
\leq
\frac{\tau}{1-\gamma}
D_{\mathrm{KL}}(\nu^\pi\|g_{\lambda,\tau}).
$$
\end{theorem}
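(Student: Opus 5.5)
We prove Theorem~\ref{thm:policy_evaluation_error} by comparing values of the frozen-target functional $F(\nu_\star,\cdot)$ at $\nu_\star$ and at an arbitrary competitor $\nu$. The two ingredients are convexity of the quadratic part of $H(\nu_\star,\cdot)$ and a contraction of $T^\pi$ in $L^2(d^\pi)$.

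\textbf{Step 1: rewrite $H$.} Write $U:=T^\pi V_{\nu_\star}$. Conditioning on $s$, the target $r(s,a)+\gamma V(s';\nu_\star)$ has conditional mean $U(s)$. Hence
$$
H(\nu_\star,\nu)=\tfrac12\|V_\nu-U\|_{L^2(d^\pi)}^2+C,
$$
where $C$ is a conditional-variance term independent of $\nu$. Using the identity $R-\tau\mathcal H=\tau D_{\mathrm{KL}}(\cdot\|g_{\lambda,\tau})+\mathrm{const}$, we get
$$
F(\nu_\star,\nu)=\tfrac12\|V_\nu-U\|^2+\tau D_{\mathrm{KL}}(\nu\|g_{\lambda,\tau})+\mathrm{const}.
$$

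\textbf{Step 2: strengthened optimality inequality.} The map $\nu\mapsto V_\nu$ is linear, so $\nu\mapsto\frac12\|V_\nu-U\|^2$ is convex quadratic along mixtures, and the KL term is convex. Since $\nu_\star$ minimizes the sum, the first-order condition along $\nu_\star+\epsilon(\nu-\nu_\star)$, combined with the exact quadratic expansion of the squared term (using only convexity of KL), yields
$$
\tfrac12\|V_{\nu_\star}-U\|^2+\tfrac12\|V_\nu-V_{\nu_\star}\|^2+\tau D_{\mathrm{KL}}(\nu_\star\|g_{\lambda,\tau})\le \tfrac12\|V_\nu-U\|^2+\tau D_{\mathrm{KL}}(\nu\|g_{\lambda,\tau}).
$$
This is the main technical step. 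The directional derivative of KL at $\nu_\star$ toward $\nu$ may be $-\infty$ or ill-defined, so we justify the inequality via the convexity bound $D_{\mathrm{KL}}(\nu_\epsilon)\le(1-\epsilon)D_{\mathrm{KL}}(\nu_\star)+\epsilon D_{\mathrm{KL}}(\nu)$, divide by $\epsilon$, and let $\epsilon\to0$. Doing this yields the cross term from the quadratic piece. Keeping the term $\frac12\|V_\nu-V_{\nu_\star}\|^2$ requires expanding the quadratic exactly.

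\textbf{Step 3: Bellman contraction.} Stationarity of $d^\pi$ and Jensen give $\|P^\pi f\|_{L^2(d^\pi)}\le\|f\|_{L^2(d^\pi)}$. Hence $\|T^\pi f-T^\pi g\|\le\gamma\|f-g\|$, and $V^\pi=T^\pi V^\pi$.

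\emph{Realizable case.} Take $\nu=\nu^\pi$ and set $e:=V_{\nu_\star}-V^\pi$. Then $U-V^\pi=\gamma P^\pi(V_{\nu_\star}-V^\pi)$, so $V_{\nu_\star}-U=e-\gamma P^\pi e$ and $V^\pi-U=-\gamma P^\pi e$. Expanding Step 2 gives
$$
\tfrac12\|e-\gamma P^\pi e\|^2+\tfrac12\|e\|^2-\tfrac12\gamma^2\|P^\pi e\|^2\le\tau\bigl[D_{\mathrm{KL}}(\nu^\pi\|g_{\lambda,\tau})-D_{\mathrm{KL}}(\nu_\star\|g_{\lambda,\tau})\bigr].
$$
The left side equals $\|e\|^2-\gamma\langle e,P^\pi e\rangle\ge(1-\gamma)\|e\|^2$ by Cauchy--Schwarz and Step 3. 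This gives the stated bound, and the second inequality follows from $D_{\mathrm{KL}}\ge0$.

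\emph{General case.} For arbitrary $\nu$, let $\delta:=\|V_\nu-V^\pi\|$ and write $V_\nu-U=(V_\nu-V^\pi)-\gamma P^\pi e$. Repeating the computation with the extra terms, every additional piece is bounded by $(1+\gamma)\delta\|e\|$ plus $O(\delta^2)$. This yields a quadratic inequality of the form
$$
(1-\gamma)\|e\|^2-(1+\gamma)\delta\|e\|\le\tau D_{\mathrm{KL}}(\nu\|g_{\lambda,\tau})
$$
up to these terms. Solving for $\|e\|$ and using $\sqrt{a+b}\le\sqrt a+\sqrt b$ gives the bound $\frac{1+\gamma}{1-\gamma}\delta+\sqrt{\tau D_{\mathrm{KL}}/(1-\gamma)}$. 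Taking the infimum over $\nu$ finishes the proof. The constants need care: we should check that the $\delta^2$ terms are absorbed rather than worsening the factor $\frac{1+\gamma}{1-\gamma}$. If they are not, we instead use the triangle inequality with the intermediate function $V_\nu$.
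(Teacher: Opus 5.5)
This is essentially the paper's proof. The paper also derives the first-order optimality inequality along mixtures $(1-\varepsilon)\nu_\star+\varepsilon\nu$ using only convexity of relative entropy. It then uses $V_{\nu_\star}-T^\pi V_{\nu_\star}=(I-\gamma P^\pi)e$, the coercivity $\langle (I-\gamma P^\pi)e,e\rangle\ge(1-\gamma)\|e\|^2$, the bound $\|(I-\gamma P^\pi)e\|\le(1+\gamma)\|e\|$, and $x^2\le bx+c\Rightarrow x\le b+\sqrt c$.

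Your ``strengthened'' Step 2 inequality is exactly equivalent to that first-order inequality, because the quadratic expansion is exact. This also closes the general case you left hedged. Set $h:=V_\nu-V^\pi$ and $A:=I-\gamma P^\pi$; then $V_\nu-U=(h-e)+Ae$. So Step 2 reduces identically to $\langle Ae,e\rangle\le\langle Ae,h\rangle+\tau[D_{\mathrm{KL}}(\nu\|g_{\lambda,\tau})-D_{\mathrm{KL}}(\nu_\star\|g_{\lambda,\tau})]$, with no $\delta^2$ remainder. This gives the constant $\frac{1+\gamma}{1-\gamma}$ directly, and the triangle-inequality fallback is unnecessary.
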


Theorem~\ref{thm:policy_evaluation_error} separates two sources of
error. The first term measures the approximation error of the
mean-field function class, while the second reflects the parameter-space
regularization. In the realizable case, the approximation term
vanishes.

\subsection{Extension to Soft Q-Learning}

We now extend the analysis to entropy-regularized off-policy control.
Fix $\beta>0$ and a finite reference measure $m_{\mathcal A}$ on
$\mathcal A$, taken to be counting measure when $\mathcal A$ is finite.
For a Q-function $Q$, define the soft value operator
$$
(\mathcal S_\beta Q)(s)
:=
\beta\log
\int_{\mathcal A}
\exp\left(
\frac{Q(s,a)}{\beta}
\right)
m_{\mathcal A}(da)
$$
and the soft Bellman operator
$$
(\mathcal T_\beta Q)(s,a)
:=
r(s,a)
+
\gamma
\mathbb E_{s'\sim P(\cdot\mid s,a)}
\left[
(\mathcal S_\beta Q)(s')
\right].
$$

We write $Q_\nu(s,a):=Q(s,a;\nu)$ for the Q-function represented by
a parameter law $\nu$. For target and online laws $\mu$ and $\nu$,
define the frozen-target soft-Q objective
$$
H_Q(\mu,\nu)
:=
\frac12
\mathbb E_{(s,a,s')\sim\mu_b}
\left[
\left(
r(s,a)
+
\gamma(\mathcal S_\beta Q_\mu)(s')
-
Q_\nu(s,a)
\right)^2
\right],
$$
where
$\mu_b(ds,da,ds')=d_b(ds,da)P(ds'\mid s,a)$
for a fixed off-policy state-action sampling distribution $d_b$.
We set
$$
E_Q(\mu,\nu):=H_Q(\mu,\nu)+R(\nu),
\qquad
F_Q(\mu,\nu):=E_Q(\mu,\nu)-\tau\mathcal H(\nu).
$$

The dependence on the target and online laws remains analogous to that
in policy evaluation. Appendix~\ref{app:soft_q_mixed_smoothness}
verifies the corresponding mixed-smoothness condition with constant
$L_p^Q$. Let
$$
T_Q(\mu)
:=
\arg\min_\nu F_Q(\mu,\nu),
\qquad
G_Q(t)
:=
F_Q(\nu_t,\nu_t)
-
F_Q(\nu_t,T_Q(\nu_t)).
$$

\begin{theorem}[Exponential convergence of soft Q-learning]
\label{thm:soft_q_convergence}
Suppose that the soft-Q frozen-target objective satisfies the
mixed-smoothness condition with constant $L_p^Q$ and that its
associated Gibbs family satisfies the uniform log-Sobolev inequality
with constant $\rho$. Let $(\nu_t)_{t\geq0}$ be a sufficiently regular
solution to the corresponding noisy mean-field semi-gradient dynamics,
with $F_Q(\nu_0,\nu_0)<\infty$. If
$$
\rho\tau>L_p^Q,
$$
then the Gibbs response map $T_Q$ admits a unique fixed point
$\nu_\star^Q$, and the mean-field soft-Q dynamics converge
exponentially to $\nu_\star^Q$ in $W_2$. More precisely, for every
$t\geq0$,
$$
W_2(\nu_t,\nu_\star^Q)
\leq
\frac{\sqrt{2\rho\tau}}
{\rho\tau-L_p^Q}
\sqrt{G_Q(0)}
e^{-(\rho\tau-L_p^Q)t}.
$$
\end{theorem}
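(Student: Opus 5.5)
The proof will reduce Theorem~\ref{thm:soft_q_convergence} to the policy-evaluation argument. The plan is to check that the proofs of Lemma~\ref{lem:gap_decay}, Lemma~\ref{lem:response_contraction} and Theorem~\ref{thm:policy_evaluation_convergence} use only three structural facts:
\begin{itemize}
\item[(i)] for fixed $\mu$, $\nu\mapsto F_Q(\mu,\nu)$ is a convex, entropy- and quadratically-regularized functional with a Gibbs minimizer, so that Propositions~\ref{prop:gibbs_minimizer} and~\ref{prop:entropy_sandwich} hold;
\item[(ii)] the mixed-smoothness bound with constant $L_p^Q$;
\item[(iii)] the uniform LSI with constant $\rho$.
\end{itemize}
The soft-Q dynamics are $\partial_t q_t=\nabla\cdot(q_t\nabla\frac{\delta F_Q}{\delta\nu}(q_t,q_t))$.

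For (i), $H_Q(\mu,\cdot)$ is a quadratic regression toward the fixed target $r+\gamma\mathcal S_\beta Q_\mu$, which is bounded, so $F_Q(\mu,\cdot)$ is convex in $\nu$. The first variation $-\mathbb E[(r+\gamma\mathcal S_\beta Q_\mu-Q_\nu)\phi]$ is bounded with bounded $\theta$-gradient. Proposition~\ref{prop:gibbs_minimizer} and Proposition~\ref{prop:entropy_sandwich} therefore apply verbatim with $E$ replaced by $E_Q$. Facts (ii) and (iii) are hypotheses of the theorem.

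Next I will rerun Lemma~\ref{lem:gap_decay}. Differentiate $G_Q(t)=F_Q(\nu_t,\nu_t)-F_Q(\nu_t,T_Q(\nu_t))$ in time. The derivative in the online argument gives $-\int q_t\|\nabla\frac{\delta F_Q}{\delta\nu}(q_t,q_t)\|^2=-\tau^2 I(\nu_t\|\pi_{\nu_t,\nu_t})$. The target-argument derivatives of the two terms combine, by the envelope theorem at the minimizer $T_Q(\nu_t)$, into $\int\nabla[\frac{\delta H_Q}{\delta\mu}(\nu_t,\nu_t)-\frac{\delta H_Q}{\delta\mu}(\nu_t,T_Q(\nu_t))]\cdot v_t\,dq_t$, where $v_t=-\nabla\frac{\delta F_Q}{\delta\nu}$ is the velocity field. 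By (ii) and Cauchy--Schwarz this cross term is at most $L_pW_2(\nu_t,T_Q\nu_t)\,\tau\sqrt{I}$.

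To close the estimate, apply Talagrand's inequality (implied by the LSI, Otto--Villani) to get $W_2(\nu_t,T_Q\nu_t)\le\sqrt{2D_{\mathrm{KL}}(\nu_t\|T_Q\nu_t)/\rho}\le\sqrt{2G_Q/(\rho\tau)}$, using the sandwich lower bound. From the LSI and the sandwich upper bound, $\tau^2 I\ge 2\rho\tau G_Q$. Writing $x=\tau\sqrt I$, the two pieces give
\[
\dot G_Q\le -x^2+L_p\sqrt{2G_Q/(\rho\tau)}\,x .
\]
Since $x\ge\sqrt{2\rho\tau G_Q}$ and $L_p<\rho\tau$, the right side is decreasing in $x$ on this range. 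It is therefore bounded by its value at $x=\sqrt{2\rho\tau G_Q}$, which gives $\dot G_Q\le -2(\rho\tau-L_p)G_Q$, and Gronwall yields exponential decay. The same computation with $H_Q$ in place of $H$ is the main obstacle, because of the nonlinear log-sum-exp in $\mathcal S_\beta$. One must check that the envelope step and the cross-term bound use only the first variation in $\mu$ of $H_Q$, namely $-\gamma\mathbb E[(r+\gamma\mathcal S_\beta Q_\mu-Q_\nu)\,\mathbb E_{a'\sim\mathrm{softmax}}\phi(s',a';\theta)]$. They do, and its $\nu$-Lipschitz gradient is exactly what (ii) controls.

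Lemma~\ref{lem:response_contraction} transfers in the same way, giving $W_2(T_Q\mu_1,T_Q\mu_2)\le\frac{L_p^Q}{\rho\tau}W_2(\mu_1,\mu_2)$. Concretely, compare the Gibbs potentials of $T_Q\mu_1$ and $T_Q\mu_2$ through their first variations in the online argument, which depend on $\mu$ only through the target term. Bound the relative Fisher information between them using (ii). Then apply LSI plus Talagrand to turn this into a $W_2$ bound. Banach's theorem then gives the unique fixed point $\nu^Q_\star$.

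Finally, the triangle inequality gives
\[
W_2(\nu_t,\nu_\star^Q)\le W_2(\nu_t,T_Q\nu_t)+\tfrac{L_p^Q}{\rho\tau}W_2(\nu_t,\nu_\star^Q),
\]
so
\[
W_2(\nu_t,\nu_\star^Q)\le\frac{\rho\tau}{\rho\tau-L_p^Q}\sqrt{\frac{2G_Q(t)}{\rho\tau}}=\frac{\sqrt{2\rho\tau}}{\rho\tau-L_p^Q}\sqrt{G_Q(t)} .
\]
Inserting the decay of $G_Q$ gives the stated bound.
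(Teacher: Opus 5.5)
Your reduction strategy, the gap-decay computation and the final triangle-inequality step all match the paper. Closing via monotonicity of $x\mapsto -x^2+ax$ on $x\ge\sqrt{2\rho\tau G_Q}$ is equivalent to the paper's bound $\|v^{\mathrm{mov}}\|\le (L_p^Q/\rho\tau)\|v^{\mathrm{rel}}\|$.

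The gap is in the concrete mechanism you give for the response contraction. The Gibbs potential of $\nu_i:=T_Q(\mu_i)$ is $\tau^{-1}\frac{\delta E_Q}{\delta\nu}(\mu_i,\nu_i)$, which is evaluated at its own online law. So $\nabla\log(\nu_1/\nu_2)$ contains two pieces:
\begin{itemize}
\item the target difference $\gamma\tau^{-1}\nabla_\theta\mathbb E_{\mu_b}[(\mathcal S_\beta Q_{\mu_1}-\mathcal S_\beta Q_{\mu_2})(s')\phi((s,a);\theta)]$;
\item the online term $-\tau^{-1}\nabla_\theta\mathbb E_{\mu_b}[(Q_{\nu_1}-Q_{\nu_2})(s,a)\phi((s,a);\theta)]$.
\end{itemize}
Hypothesis (ii) says nothing about the online term. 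Bounding it by some $L'W_2(\nu_1,\nu_2)$ and then applying LSI plus Talagrand yields at best $W_2(\nu_1,\nu_2)\le \frac{L}{\rho\tau-L'}W_2(\mu_1,\mu_2)$. That changes both the contraction threshold and the prefactor $\frac{\sqrt{2\rho\tau}}{\rho\tau-L_p^Q}$ in the theorem.

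There is a second mismatch. (ii) bounds the $\theta$-gradient of the target first variation $\frac{\delta H_Q}{\delta\mu}$ under changes of $\nu$. Comparing Gibbs potentials instead needs the $\theta$-gradient of the online first variation $\frac{\delta H_Q}{\delta\nu}$ under changes of $\mu$. That transposed condition is not the hypothesis. It does hold for the concrete network with constant $\gamma\overline L_\phi^2$, because $\mathcal S_\beta$ is $1$-Lipschitz in the action sup-norm, but that is an extra argument you would have to supply.

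The paper's proof of the contraction lemma avoids both issues. Apply the lower entropy-sandwich bound to $F_Q(\mu_2,\cdot)$ at $\nu_1$ and to $F_Q(\mu_1,\cdot)$ at $\nu_2$, then add. The quadratic regularizer, the entropy, and the purely online part $\frac12\mathbb E[Q_\nu^2]$ of $H_Q$ all cancel; the convexity of the last is already absorbed into the sandwich. What remains is $\tau[D_{\mathrm{KL}}(\nu_1\|\nu_2)+D_{\mathrm{KL}}(\nu_2\|\nu_1)]\le \Delta_{H_Q}$, where $\Delta_{H_Q}=\int_0^1\int[\frac{\delta H_Q}{\delta\mu}(\mu_s,\nu_1)-\frac{\delta H_Q}{\delta\mu}(\mu_s,\nu_2)]\,d(\mu_2-\mu_1)\,ds$. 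Using (ii) in exactly its stated form together with Kantorovich--Rubinstein, $\Delta_{H_Q}\le L_p^Q W_2(\mu_1,\mu_2)W_2(\nu_1,\nu_2)$. Talagrand in both directions then bounds $\rho\tau W_2^2(\nu_1,\nu_2)$ by the left-hand side, which gives the contraction constant $L_p^Q/(\rho\tau)$.

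If you prefer a Fisher-information argument, compare $\nu_1=\pi_{\mu_1,\nu_1}$ with the intermediate Gibbs law $\pi_{\mu_2,\nu_1}$ rather than with $\nu_2$. The sandwich gives $D_{\mathrm{KL}}(\nu_1\|\nu_2)\le D_{\mathrm{KL}}(\nu_1\|\pi_{\mu_2,\nu_1})$, and in that log-ratio only the target term appears. This recovers the constant $1/(\rho\tau)$ times the transposed smoothness constant, so you would still need that transposed bound.
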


The proof follows the same two-step structure as in policy evaluation.
The soft-Q analogue of Lemma~\ref{lem:gap_decay} gives exponential
decay of $G_Q(t)$, while the analogue of Lemma~\ref{lem:response_contraction}
controls the Gibbs response map. We leave these parallel intermediate
statements inside the proof rather than repeating them as separate
lemmas.

To compare the limiting equilibrium with the soft Bellman fixed point, we assume full support of the off-policy state-action distribution. This converts the $L^2(d_b)$ Bellman residual to a sup-norm bound, after which we use the standard sup-norm contraction of the soft Bellman operator.

\begin{theorem}[Soft-Q approximation and regularization error]
\label{thm:soft_q_error}
Suppose that $\mathcal S$ and $\mathcal A$ are finite, and let $d_b$
be the state-action marginal of the off-policy sampling distribution.
Assume that
$$
d_{\min}
:=
\min_{s,a}d_b(s,a)
>
0.
$$
Let $Q_\beta^\star$ be the fixed point of the soft Bellman operator
$\mathcal T_\beta$, and let $\nu_\star^Q$ be a self-consistent soft-Q
equilibrium satisfying
$$
\nu_\star^Q
\in
\arg\min_\nu
F_Q(\nu_\star^Q,\nu).
$$
Then
$$
\left\|
Q_{\nu_\star^Q}
-
Q_\beta^\star
\right\|_\infty
\leq
\frac{1}
{(1-\gamma)\sqrt{d_{\min}}}
\inf_\nu
\left\{
\left\|
Q_\nu
-
\mathcal T_\beta Q_{\nu_\star^Q}
\right\|_{L^2(d_b)}
+
\sqrt{
\tau
D_{\mathrm{KL}}
(\nu\|g_{\lambda,\tau})
}
\right\},
$$
where the infimum is over parameter laws with finite
$D_{\mathrm{KL}}(\nu\|g_{\lambda,\tau})$.
\end{theorem}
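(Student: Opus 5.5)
The plan is to combine three ingredients: the variational optimality of $\nu_\star^Q$ for the frozen-target problem, a conversion from an $L^2(d_b)$ Bellman residual to a sup-norm residual using $d_{\min}>0$, and the $\gamma$-contraction of $\mathcal T_\beta$ in sup norm. Throughout, write $Q_\star:=Q_{\nu_\star^Q}$.

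\emph{Step 1 (bias--variance split of $H_Q$).} Fix the target at $\mu=\nu_\star^Q$. Conditioning on $(s,a)$ and using $\mathbb E_{s'\sim P(\cdot\mid s,a)}[r(s,a)+\gamma(\mathcal S_\beta Q_\star)(s')]=(\mathcal T_\beta Q_\star)(s,a)$, I will write
$$
H_Q(\nu_\star^Q,\nu)=\tfrac12\|Q_\nu-\mathcal T_\beta Q_\star\|_{L^2(d_b)}^2+C_\star ,
$$
where $C_\star$ is the conditional variance of the target and does not depend on the online law $\nu$. On the regularization side, the identity $R(\nu)-\tau\mathcal H(\nu)=\tau D_{\mathrm{KL}}(\nu\|g_{\lambda,\tau})-\text{const}$ stated before Theorem~\ref{thm:policy_evaluation_error} rewrites the remaining part of $F_Q(\nu_\star^Q,\nu)$ as a KL term up to a $\nu$-independent constant.

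\emph{Step 2 (optimality comparison).} Since $\nu_\star^Q$ minimizes $F_Q(\nu_\star^Q,\cdot)$, comparing it with an arbitrary $\nu$ of finite $D_{\mathrm{KL}}(\nu\|g_{\lambda,\tau})$ cancels both constants and gives
$$
\tfrac12\|Q_\star-\mathcal T_\beta Q_\star\|_{L^2(d_b)}^2
\le
\tfrac12\|Q_\nu-\mathcal T_\beta Q_\star\|_{L^2(d_b)}^2+\tau D_{\mathrm{KL}}(\nu\|g_{\lambda,\tau})-\tau D_{\mathrm{KL}}(\nu_\star^Q\|g_{\lambda,\tau}).
$$
I will drop the nonnegative term $\tau D_{\mathrm{KL}}(\nu_\star^Q\|g_{\lambda,\tau})$ and use $\sqrt{x+y}\le\sqrt x+\sqrt y$. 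This bounds $\|Q_\star-\mathcal T_\beta Q_\star\|_{L^2(d_b)}$ by $\|Q_\nu-\mathcal T_\beta Q_\star\|_{L^2(d_b)}$ plus a term of order $\sqrt{\tau D_{\mathrm{KL}}(\nu\|g_{\lambda,\tau})}$.

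\emph{Constant check (the expected obstacle).} The delicate point is the constant in front of the KL term. The factor $\tfrac12$ in $H_Q$ naively yields $\sqrt{2\tau D_{\mathrm{KL}}}$ rather than the stated $\sqrt{\tau D_{\mathrm{KL}}}$. I will recheck the normalization of $H_Q$ against $R-\tau\mathcal H$, and whether the dropped term $-\tau D_{\mathrm{KL}}(\nu_\star^Q\|g_{\lambda,\tau})$ can absorb the discrepancy. If neither works, the argument establishes the bound with $\sqrt{2\tau D_{\mathrm{KL}}(\nu\|g_{\lambda,\tau})}$, which matches the statement up to an absolute constant.

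\emph{Step 3 (sup norm).} On finite $\mathcal S\times\mathcal A$, every $(s,a)$ satisfies $d_{\min}|f(s,a)|^2\le\|f\|_{L^2(d_b)}^2$, so $\|f\|_\infty\le d_{\min}^{-1/2}\|f\|_{L^2(d_b)}$. For the contraction, note that $\mathcal S_\beta$ is monotone and commutes with adding constants, hence is $1$-Lipschitz in sup norm. Therefore $\mathcal T_\beta$ is a $\gamma$-contraction with fixed point $Q_\beta^\star$. This gives
$$
\|Q_\star-Q_\beta^\star\|_\infty\le\|Q_\star-\mathcal T_\beta Q_\star\|_\infty+\gamma\|Q_\star-Q_\beta^\star\|_\infty,
$$
so $\|Q_\star-Q_\beta^\star\|_\infty\le(1-\gamma)^{-1}d_{\min}^{-1/2}\|Q_\star-\mathcal T_\beta Q_\star\|_{L^2(d_b)}$. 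Inserting Step 2 and taking the infimum over $\nu$ concludes the proof.
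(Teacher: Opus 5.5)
Your Steps~1 and~3 are sound and match the paper. The paper converts from $L^2(d_b)$ to sup norm via $d_{\min}$ and uses the $\gamma$-contraction of $\mathcal T_\beta$ in exactly this way.

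\textbf{The gap is the one you flagged in Step~2.} The zeroth-order comparison $F_Q(\nu_\star^Q,\nu_\star^Q)\le F_Q(\nu_\star^Q,\nu)$ only gives
$\|Q_\star-\mathcal T_\beta Q_\star\|_{L^2(d_b)}\le\|Q_\nu-\mathcal T_\beta Q_\star\|_{L^2(d_b)}+\sqrt{2\tau D_{\mathrm{KL}}(\nu\|g_{\lambda,\tau})}$.
As written, you therefore prove a strictly weaker statement. Neither remedy you propose works:
\begin{itemize}
\item The normalization is exactly what you used. $H_Q$ carries the factor $\tfrac12$, and $R-\tau\mathcal H$ equals $\tau D_{\mathrm{KL}}(\cdot\|g_{\lambda,\tau})$ up to a constant, with coefficient exactly $\tau$.
\item There is no lower bound on $D_{\mathrm{KL}}(\nu_\star^Q\|g_{\lambda,\tau})$ in terms of $D_{\mathrm{KL}}(\nu\|g_{\lambda,\tau})$ that could absorb the factor.
\end{itemize}
Concretely, if $Q_\nu=\mathcal T_\beta Q_\star$ $d_b$-a.e., your inequality reads
$\|Q_\star-\mathcal T_\beta Q_\star\|^2_{L^2(d_b)}\le 2\tau\bigl[D_{\mathrm{KL}}(\nu\|g_{\lambda,\tau})-D_{\mathrm{KL}}(\nu_\star^Q\|g_{\lambda,\tau})\bigr]$.
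Your Step~3 needs the same inequality with $\tau$ in place of $2\tau$, and a single comparison at $\nu$ does not give it.

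\textbf{The missing idea is to use first-order rather than zeroth-order optimality, which is what the paper does.} Compare $\nu_\star^Q$ with the mixtures $\nu_\varepsilon=(1-\varepsilon)\nu_\star^Q+\varepsilon\nu$. Since $Q_{\nu_\varepsilon}$ is affine in $\varepsilon$ and $D_{\mathrm{KL}}(\cdot\|g_{\lambda,\tau})$ is convex, divide $F_Q(\nu_\star^Q,\nu_\varepsilon)-F_Q(\nu_\star^Q,\nu_\star^Q)\ge0$ by $\varepsilon$ and let $\varepsilon\downarrow0$. This yields
\[
\langle e,\,Q_\nu-Q_\star\rangle_{L^2(d_b)}
+\tau\bigl[D_{\mathrm{KL}}(\nu\|g_{\lambda,\tau})-D_{\mathrm{KL}}(\nu_\star^Q\|g_{\lambda,\tau})\bigr]\ge0,
\qquad e:=Q_\star-\mathcal T_\beta Q_\star,
\]
where differentiating the square has removed the $\tfrac12$. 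Writing $Q_\nu-Q_\star=(Q_\nu-\mathcal T_\beta Q_\star)-e$ gives
$\|e\|^2\le\|e\|\,\|Q_\nu-\mathcal T_\beta Q_\star\|+\tau D_{\mathrm{KL}}(\nu\|g_{\lambda,\tau})$.
The implication $x^2\le bx+c\Rightarrow x\le b+\sqrt c$ then gives the stated constant.

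The gain is visible from the identity (norms in $L^2(d_b)$)
$\tfrac12\|Q_\nu-\mathcal T_\beta Q_\star\|^2-\tfrac12\|e\|^2=\langle e,Q_\nu-Q_\star\rangle+\tfrac12\|Q_\nu-Q_\star\|^2$.
Your comparison keeps the nonnegative term $\tfrac12\|Q_\nu-Q_\star\|^2$ on the larger side, while the variational inequality drops it. With Step~2 replaced in this way, the rest of your argument goes through unchanged.
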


\begin{figure}[t]
    \centering
    \includegraphics[width=\linewidth]{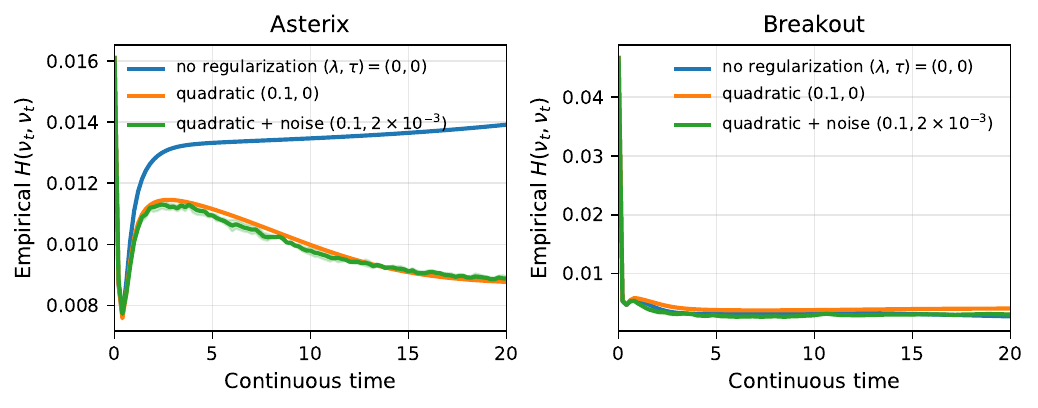}
    \caption{
Empirical $H(\nu_t,\nu_t)$ on Asterix and Breakout under
unregularized TD, quadratic regularization, and quadratic
regularization with Langevin noise. The quadratic-only baseline
isolates the effect of the confining regularizer from that of the
Langevin perturbation. Lower values correspond to smaller empirical
squared TD residuals. The noisy curves show the mean and one standard
deviation over five noise realizations.
}
    \label{fig:minatar}
\end{figure}

\section{Numerical Illustration}

We illustrate the effect of regularization on finite-width
semi-gradient TD in Asterix and Breakout (MinAtar) using fixed pretrained policies. We compare unregularized TD, quadratic
regularization, and quadratic regularization with Langevin noise. The quadratic-only baseline separates the effect of the confining regularization from the additional effect of Langevin noise. All methods use the same transition data and network initialization, so that the comparison isolates the effect of regularization.
Experimental details are given in Appendix~\ref{app:experiments}.

Figure~\ref{fig:minatar} shows two regimes. On Asterix, the
unregularized objective deteriorates after an initial decrease, while regularization suppresses this long-run increase. On Breakout, unregularized TD is already stable and the improvement is smaller. The additional effect of Langevin noise is modest at the scale considered here. The contrast between the two environments suggests that the benefit of regularization is most visible when the unregularized semi-gradient dynamics exhibit unfavorable long-run behavior. These experiments provide a qualitative illustration of the stability mechanism. We do not estimate $\rho$ or $L_p$ in these environments, so they are not a direct verification of the sufficient condition $\rho\tau>L_p$.

\section{Conclusion}

We studied noisy semi-gradient TD with two-layer neural networks in the mean-field regime and identified a stability mechanism in which relaxation toward the frozen-target minimizer dominates the sensitivity of the moving bootstrap target. Under this condition, we proved exponential convergence to a unique self-consistent equilibrium and quantified its error relative to the unregularized value function. We also showed that the same mechanism extends to entropy-regularized off-policy soft Q-learning.

\paragraph{Scope and limitations.}
Our analysis focuses on the population continuous-time dynamics in order to isolate the stability mechanism caused by the moving bootstrap target. We do not provide finite-width, time-discretization, or stochastic-gradient guarantees in this work. Related propagation-of-chaos results for mean-field TD \citep{agazzi:22} and finite-particle and discretization analyses for mean-field Langevin dynamics \citep{suzuki:23} provide natural starting points for such extensions. Establishing their quantitative counterparts for the present semi-gradient dynamics is left for future work.


\bibliography{main}
\bibliographystyle{rlj}

\clearpage
\appendix

\section{Verification of the Main Assumptions}
\label{app:assumption_verification}

\subsection{A Sufficient Condition for the Uniform Log-Sobolev Inequality}
\label{app:lsi_verification}

Assumption~\ref{ass:lsi} is imposed abstractly in the main results.
Here we give one concrete sufficient condition under an additional
second-order regularity assumption on the feature map. This additional
condition is used only to invoke the Bakry--\'Emery criterion and is
not specific to the semi-gradient structure studied in this paper. In
particular, it is satisfied by the two-layer parametrization in
Section~2.4 for standard smooth bounded activations such as $\tanh$ and
sigmoid when the input domain is bounded. Our main convergence theorem
is stated in terms of a general uniform log-Sobolev constant $\rho$, so
any sharper estimate of $\rho$ enlarges the resulting stability regime
without changing the argument.

Recall that, for fixed target and online laws $\mu,\nu$, the associated
Gibbs law has density
$$
\pi_{\mu,\nu}(\theta)
\propto
\exp\left(
-\frac{1}{\tau}
\frac{\delta E}{\delta\nu}(\mu,\nu)(\theta)
\right).
$$
For policy evaluation, define
$$
\Delta_{\mu,\nu}(s,a,s')
:=
r(s,a)
+
\gamma V(s';\mu)
-
V(s;\nu).
$$
Then
$$
\frac{\delta E}{\delta\nu}(\mu,\nu)(\theta)
=
-
\mathbb E_{s,a,s'}
\left[
\Delta_{\mu,\nu}(s,a,s')
\phi(s;\theta)
\right]
+
\frac{\lambda}{2}\|\theta\|^2.
$$

For the verification below, suppose in addition that
$\phi(s;\cdot)$ is twice continuously differentiable and that
\begin{equation}
B_0
:=
\sup_{s,\theta}
|\phi(s;\theta)|
<
\infty,
\qquad
B_2
:=
\sup_{s,\theta}
\|\nabla_\theta^2\phi(s;\theta)\|_{\mathrm{op}}
<
\infty.
\label{eq:feature_second_order_bounds}
\end{equation}

For the feature map
$$
\phi(x;\theta)
=
\sigma_1(c)\sigma_2(w^\top x+b),
\qquad
\theta=(c,w,b),
$$
these bounds hold when the input domain is bounded and
$\sigma_1,\sigma_2$ and their derivatives up to second order are
bounded. This includes, for example, $\tanh$ and sigmoid activations.

Since $\mu$ and $\nu$ are probability measures,
$$
|V(s;\mu)|
\leq
B_0,
\qquad
|V(s;\nu)|
\leq
B_0.
$$
Together with $r\in[0,1]$, this yields the uniform residual bound
$$
|\Delta_{\mu,\nu}(s,a,s')|
\leq
D_\Delta,
\qquad
D_\Delta
:=
1+(1+\gamma)B_0.
$$

\begin{proposition}[A Bakry--\'Emery certificate for Assumption~\ref{ass:lsi}]
\label{prop:lsi_sufficient}
Suppose that \eqref{eq:feature_second_order_bounds} holds and
$$
\lambda>D_\Delta B_2.
$$
Then every Gibbs law $\pi_{\mu,\nu}$ satisfies
$$
D_{\mathrm{KL}}(m\|\pi_{\mu,\nu})
\leq
\frac{1}{2\rho_{\mathrm{BE}}}
I(m\|\pi_{\mu,\nu})
$$
for every
$m\in\mathcal P_2^{\mathrm{ac}}(\mathbb R^d)$, uniformly over
$\mu,\nu$, with
$$
\rho_{\mathrm{BE}}
:=
\frac{\lambda-D_\Delta B_2}{\tau}.
$$
Equivalently,
$$
\rho_{\mathrm{BE}}\tau
=
\lambda-D_\Delta B_2.
$$
\end{proposition}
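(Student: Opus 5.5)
The plan is to apply the Bakry--\'Emery criterion directly. For each fixed pair $(\mu,\nu)$, write $\pi_{\mu,\nu}\propto e^{-U_{\mu,\nu}}$ with potential
$$
U_{\mu,\nu}(\theta):=\frac{1}{\tau}\frac{\delta E}{\delta\nu}(\mu,\nu)(\theta)
=\frac{1}{\tau}\left(\frac{\lambda}{2}\|\theta\|^2-\mathbb E_{s,a,s'}\bigl[\Delta_{\mu,\nu}(s,a,s')\phi(s;\theta)\bigr]\right).
$$
The Bakry--\'Emery theorem states that if $\nabla^2 U\succeq \kappa I_d$ for some $\kappa>0$, then the normalized measure satisfies a log-Sobolev inequality with constant $\kappa$. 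In the paper's normalization this reads $D_{\mathrm{KL}}(m\|\pi)\le \frac{1}{2\kappa}I(m\|\pi)$. So it suffices to show $\nabla^2 U_{\mu,\nu}\succeq \rho_{\mathrm{BE}} I_d$ uniformly in $(\mu,\nu)$.

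To do this, differentiate under the expectation twice in $\theta$. Note that $\Delta_{\mu,\nu}$ does not depend on $\theta$, because $\mu$ and $\nu$ are frozen when the first variation is evaluated at the point $\theta$. This gives
$$
\nabla^2_\theta U_{\mu,\nu}(\theta)=\frac{1}{\tau}\left(\lambda I_d-\mathbb E\bigl[\Delta_{\mu,\nu}\nabla_\theta^2\phi(s;\theta)\bigr]\right).
$$
Interchanging differentiation and expectation is justified by dominated convergence, since $|\Delta_{\mu,\nu}|\le D_\Delta$ and $\|\nabla^2_\theta\phi\|_{\mathrm{op}}\le B_2$ uniformly by \eqref{eq:feature_second_order_bounds}. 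The same bounds give
$$
\bigl\|\mathbb E[\Delta_{\mu,\nu}\nabla^2_\theta\phi]\bigr\|_{\mathrm{op}}\le \mathbb E\bigl[|\Delta_{\mu,\nu}|\,\|\nabla^2_\theta\phi\|_{\mathrm{op}}\bigr]\le D_\Delta B_2 .
$$
Hence $\nabla^2 U_{\mu,\nu}\succeq \tau^{-1}(\lambda-D_\Delta B_2)I_d=\rho_{\mathrm{BE}}I_d$, which is positive under the hypothesis $\lambda>D_\Delta B_2$. The bound is independent of $(\mu,\nu)$ and of $\theta$.

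Before applying the criterion, check that $\pi_{\mu,\nu}$ is a well-defined probability measure in $\mathcal P_2^{\mathrm{ac}}$. The perturbation term is bounded by $D_\Delta B_0$, so $e^{-U_{\mu,\nu}}$ is sandwiched between constant multiples of a Gaussian density. Bakry--\'Emery then yields the stated inequality for every $m\in\mathcal P_2^{\mathrm{ac}}(\mathbb R^d)$ with finite Fisher information; when $I(m\|\pi_{\mu,\nu})=\infty$ the inequality is trivial.

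The main subtlety is not the curvature computation but applying Bakry--\'Emery to potentials that are only $C^2$ rather than smooth. If needed, I would handle this by mollifying $\phi$, or by invoking the version of the criterion for $C^2$ potentials that are uniformly convex. The curvature bound is stable under mollification, and the log-Sobolev inequality passes to the limit by lower semicontinuity of the relative entropy.
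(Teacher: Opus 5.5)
Your proposal is correct and follows the paper's proof essentially verbatim. Like the paper, you lower-bound the Hessian of the potential $U_{\mu,\nu}=\tau^{-1}\frac{\delta E}{\delta\nu}(\mu,\nu)$ by $\tau^{-1}(\lambda-D_\Delta B_2)I_d$, uniformly in $(\mu,\nu,\theta)$, using $|\Delta_{\mu,\nu}|\le D_\Delta$ and $\|\nabla^2_\theta\phi\|_{\mathrm{op}}\le B_2$, and then invoke Bakry--\'Emery. Your extra checks are correct details the paper leaves implicit: the Gaussian sandwich showing $\pi_{\mu,\nu}$ is well defined, the derivative--expectation interchange, and the LSI normalization $\frac{1}{2\kappa}$.
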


\begin{proof}
Differentiating the first variation with respect to $\theta$ twice
gives
$$
\nabla_\theta^2
\frac{\delta E}{\delta\nu}(\mu,\nu)(\theta)
=
\lambda I_d
-
\mathbb E_{s,a,s'}
\left[
\Delta_{\mu,\nu}(s,a,s')
\nabla_\theta^2\phi(s;\theta)
\right].
$$
For every unit vector $u\in\mathbb R^d$,
\begin{align}
u^\top
\nabla_\theta^2
\frac{\delta E}{\delta\nu}(\mu,\nu)(\theta)
u
&=
\lambda
-
\mathbb E_{s,a,s'}
\left[
\Delta_{\mu,\nu}(s,a,s')
u^\top\nabla_\theta^2\phi(s;\theta)u
\right]
\nonumber\\
&\geq
\lambda
-
\mathbb E_{s,a,s'}
\left[
|\Delta_{\mu,\nu}(s,a,s')|
\|\nabla_\theta^2\phi(s;\theta)\|_{\mathrm{op}}
\right]
\nonumber\\
&\geq
\lambda-D_\Delta B_2.
\label{eq:frozen_potential_curvature}
\end{align}
Hence
$$
\nabla_\theta^2
\frac{\delta E}{\delta\nu}(\mu,\nu)(\theta)
\succeq
(\lambda-D_\Delta B_2)I_d
$$
uniformly over $\mu,\nu,\theta$.

Up to an additive normalizing constant, the negative log-density of
$\pi_{\mu,\nu}$ is
$$
U_{\mu,\nu}(\theta)
:=
\frac{1}{\tau}
\frac{\delta E}{\delta\nu}(\mu,\nu)(\theta).
$$
Therefore,
$$
\nabla_\theta^2U_{\mu,\nu}(\theta)
\succeq
\frac{\lambda-D_\Delta B_2}{\tau}I_d
=
\rho_{\mathrm{BE}}I_d.
$$
The Bakry--\'Emery criterion then gives the claimed log-Sobolev
inequality.
\end{proof}

Proposition~\ref{prop:lsi_sufficient} is only one sufficient
certificate for Assumption~\ref{ass:lsi}. Combined with the stability
condition $\rho\tau>L_p$, it yields the explicit sufficient condition
$$
\lambda>D_\Delta B_2+L_p.
$$
The convergence analysis itself does not rely on this particular
certificate or on the additional second-order regularity used above.
Any stronger log-Sobolev estimate for the Gibbs family can instead be
substituted directly into the general condition $\rho\tau>L_p$.

\subsection{Verification of the Mixed-Smoothness Condition}
\label{app:mixed_smoothness}

We next verify the mixed-smoothness condition for the two-layer
mean-field parameterization used in the main text. For
$x\in\mathcal X$, define
$$
L_\phi(x)
:=
\sup_{\theta\in\mathbb R^d}
\|\nabla_\theta\phi(x;\theta)\|.
$$
We assume
$$
\mathbb E
\left[
L_\phi(s)L_\phi(s')
\right]
<
\infty
$$
under the transition distribution defining $H$. This condition follows
from the standing assumptions for the two-layer feature map, as shown
below.

The first variation of $H$ with respect to its target law is
$$
\frac{\delta H}{\delta\mu}(\mu,\nu)(\theta)
=
\gamma
\mathbb E_{s,a,s'}
\left[
\Delta_{\mu,\nu}(s,a,s')
\phi(s';\theta)
\right].
$$
Therefore, for any $\nu_1,\nu_2\in\mathcal P_2(\mathbb R^d)$,
\begin{align}
&
\nabla_\theta
\left[
\frac{\delta H}{\delta\mu}(\mu,\nu_1)(\theta)
-
\frac{\delta H}{\delta\mu}(\mu,\nu_2)(\theta)
\right]
\nonumber\\
&\qquad=
\gamma
\mathbb E_{s,a,s'}
\left[
\left(
V(s;\nu_2)-V(s;\nu_1)
\right)
\nabla_\theta\phi(s';\theta)
\right].
\label{eq:mixed_first_variation_difference}
\end{align}

For every fixed state $s$, the map
$\theta\mapsto\phi(s;\theta)$ is $L_\phi(s)$-Lipschitz. Hence,
by the Kantorovich--Rubinstein inequality and $W_1\leq W_2$,
\begin{equation}
|V(s;\nu_2)-V(s;\nu_1)|
=
\left|
\int
\phi(s;\vartheta)
(\nu_2-\nu_1)(d\vartheta)
\right|
\leq
L_\phi(s)W_2(\nu_1,\nu_2).
\label{eq:value_wasserstein_lipschitz}
\end{equation}
Substituting \eqref{eq:value_wasserstein_lipschitz} into
\eqref{eq:mixed_first_variation_difference} gives

\begin{align}
&
\left\|
\nabla_\theta
\left[
\frac{\delta H}{\delta\mu}(\mu,\nu_1)(\theta)
-
\frac{\delta H}{\delta\mu}(\mu,\nu_2)(\theta)
\right]
\right\|
\nonumber\\
&\qquad\leq
\gamma
\mathbb E_{s,a,s'}
\left[
L_\phi(s)L_\phi(s')
\right]
W_2(\nu_1,\nu_2).
\label{eq:mixed_smoothness_bound}
\end{align}
Thus Assumption~\ref{ass:mixed_smoothness} holds with
$$
L_p
=
\gamma
\mathbb E_{s,a,s'}
\left[
L_\phi(s)L_\phi(s')
\right].
$$

It remains to verify that this constant is finite for the feature map
used in Section~2.4. Recall that
$$
\phi(x;\theta)
=
\sigma_1(c)\sigma_2(w^\top x+b),
\qquad
\theta=(c,w,b).
$$
Let
$$
B_{1,0}:=\|\sigma_1\|_\infty,
\qquad
B_{1,1}:=\|\sigma_1'\|_\infty,
\qquad
B_{2,0}:=\|\sigma_2\|_\infty,
\qquad
B_{2,1}:=\|\sigma_2'\|_\infty.
$$
Writing $z=w^\top x+b$, we have
$$
\nabla_\theta\phi(x;\theta)
=
\begin{pmatrix}
\sigma_1'(c)\sigma_2(z)\\
\sigma_1(c)\sigma_2'(z)x\\
\sigma_1(c)\sigma_2'(z)
\end{pmatrix},
$$
and consequently
\begin{align}
L_\phi(x)^2
&\leq
B_{1,1}^2B_{2,0}^2
+
B_{1,0}^2B_{2,1}^2
\left(
1+\|x\|^2
\right)
\nonumber\\
&\leq
K_\phi^2
\left(
1+\|x\|^2
\right),
\label{eq:two_layer_feature_gradient_bound}
\end{align}
where
$$
K_\phi^2
:=
B_{1,1}^2B_{2,0}^2
+
B_{1,0}^2B_{2,1}^2.
$$
Therefore, by Cauchy--Schwarz,
\begin{align}
L_p
&\leq
\gamma
\sqrt{
\mathbb E[L_\phi(s)^2]
\mathbb E[L_\phi(s')^2]
}
\nonumber\\
&\leq
\gamma K_\phi^2
\sqrt{
\left(
1+\mathbb E\|s\|^2
\right)
\left(
1+\mathbb E\|s'\|^2
\right)
}.
\label{eq:explicit_two_layer_Lp_bound}
\end{align}
Hence $L_p<\infty$ whenever the state marginals of the transition
distribution have finite second moments. In particular, this is
automatic under the bounded-state assumption in Section~2.2.

\section{Proofs for Policy Evaluation}
\label{app:policy_evaluation}

\subsection{Proof of Lemma~\ref{lem:gap_decay}}
\label{app:proof_gap_decay}

\begin{proof}
Recall that
$$
G(t)
=
F(\nu_t,\nu_t)
-
F(\nu_t,T(\nu_t)),
$$
where $\nu_t(d\theta)=q_t(\theta)d\theta$ and $T(\nu_t)$ is the
minimizer of $\nu\mapsto F(\nu_t,\nu)$.
By the chain rule and the envelope theorem applied to the minimizer
$T(\nu_t)$,
\begin{align}
\frac{d}{dt}G(t)
&=
\int
\partial_t q_t(\theta)
\left[
\frac{\delta F}{\delta\nu}(\nu_t,\nu_t)(\theta)
+
\frac{\delta E}{\delta\mu}(\nu_t,\nu_t)(\theta)
-
\frac{\delta E}{\delta\mu}
(\nu_t,T(\nu_t))(\theta)
\right]
d\theta.
\label{eq:gap_chain_rule}
\end{align}
Here the derivative of $F$ with respect to its first argument is the
same as that of $E$, since the entropy and quadratic regularization
act only on the second argument.

To separate the two contributions to the evolution of $G(t)$, define
the relaxation field
$$
v_t^{\mathrm{rel}}(\theta)
:=
\nabla_\theta
\frac{\delta F}{\delta\nu}
(\nu_t,\nu_t)(\theta)
$$
and the moving-target field
$$
v_t^{\mathrm{mov}}(\theta)
:=
\nabla_\theta
\left[
\frac{\delta E}{\delta\mu}
(\nu_t,\nu_t)(\theta)
-
\frac{\delta E}{\delta\mu}
(\nu_t,T(\nu_t))(\theta)
\right].
$$
Substituting the Fokker--Planck equation
$$
\partial_t q_t
=
\nabla_\theta\cdot
\left(
q_t v_t^{\mathrm{rel}}
\right)
$$
into \eqref{eq:gap_chain_rule} and integrating by parts gives
\begin{align}
\frac{d}{dt}G(t)
=
-
\left\|
v_t^{\mathrm{rel}}
\right\|_{L^2(\nu_t)}^2
-
\left\langle
v_t^{\mathrm{mov}},
v_t^{\mathrm{rel}}
\right\rangle_{L^2(\nu_t)}.
\label{eq:gap_field_decomposition}
\end{align}

We first control the relaxation field.
By definition of the Gibbs law,
$$
\pi_{\nu_t,\nu_t}(\theta)
\propto
\exp\left(
-\frac{1}{\tau}
\frac{\delta E}{\delta\nu}
(\nu_t,\nu_t)(\theta)
\right).
$$
Since
$F(\mu,\nu)=E(\mu,\nu)-\tau\mathcal H(\nu)$,
we obtain
$$
v_t^{\mathrm{rel}}(\theta)
=
\tau
\nabla_\theta
\log
\frac{q_t(\theta)}
{\pi_{\nu_t,\nu_t}(\theta)}.
$$
Consequently,
$$
\left\|
v_t^{\mathrm{rel}}
\right\|_{L^2(\nu_t)}^2
=
\tau^2
I(\nu_t\|\pi_{\nu_t,\nu_t}).
$$
Assumption~\ref{ass:lsi} and the upper bound in the
entropy-sandwich inequality imply
\begin{align}
\left\|
v_t^{\mathrm{rel}}
\right\|_{L^2(\nu_t)}^2
&\geq
2\rho\tau^2
D_{\mathrm{KL}}
(\nu_t\|\pi_{\nu_t,\nu_t})
\nonumber\\
&\geq
2\rho\tau
\left[
F(\nu_t,\nu_t)
-
F(\nu_t,T(\nu_t))
\right]
\nonumber\\
&=
2\rho\tau G(t).
\label{eq:relaxation_field_lower_bound}
\end{align}

We next control the moving-target field.
Since
$E(\mu,\nu)=H(\mu,\nu)+R(\nu)$,
differentiation with respect to the first argument gives
$$
\frac{\delta E}{\delta\mu}
=
\frac{\delta H}{\delta\mu}.
$$
Assumption~\ref{ass:mixed_smoothness} therefore gives, pointwise in
$\theta$,
$$
\left\|
v_t^{\mathrm{mov}}(\theta)
\right\|
\leq
L_p
W_2(\nu_t,T(\nu_t)).
$$
Hence
\begin{align}
\left\|
v_t^{\mathrm{mov}}
\right\|_{L^2(\nu_t)}^2
\leq
L_p^2
W_2^2(\nu_t,T(\nu_t)).
\label{eq:moving_field_wasserstein}
\end{align}
The log-Sobolev inequality for $T(\nu_t)$ implies the corresponding
Talagrand inequality,
$$
W_2^2(\nu_t,T(\nu_t))
\leq
\frac{2}{\rho}
D_{\mathrm{KL}}
(\nu_t\|T(\nu_t)).
$$
Using the lower bound in the entropy-sandwich inequality,
$$
\tau
D_{\mathrm{KL}}
(\nu_t\|T(\nu_t))
\leq
G(t),
$$
we conclude from \eqref{eq:moving_field_wasserstein} that
\begin{align}
\left\|
v_t^{\mathrm{mov}}
\right\|_{L^2(\nu_t)}^2
\leq
\frac{2L_p^2}{\rho\tau}
G(t).
\label{eq:moving_field_upper_bound}
\end{align}

Combining
\eqref{eq:relaxation_field_lower_bound} and
\eqref{eq:moving_field_upper_bound} gives
$$
\begin{aligned}
\left\|
v_t^{\mathrm{mov}}
\right\|_{L^2(\nu_t)}
&\leq
L_p
\sqrt{
\frac{2G(t)}{\rho\tau}
}
\\
&\leq
\frac{L_p}{\rho\tau}
\left\|
v_t^{\mathrm{rel}}
\right\|_{L^2(\nu_t)}.
\end{aligned}
$$
Therefore, by \eqref{eq:gap_field_decomposition} and
Cauchy--Schwarz,
$$
\begin{aligned}
\frac{d}{dt}G(t)
&\leq
-
\left\|
v_t^{\mathrm{rel}}
\right\|_{L^2(\nu_t)}^2
+
\left\|
v_t^{\mathrm{mov}}
\right\|_{L^2(\nu_t)}
\left\|
v_t^{\mathrm{rel}}
\right\|_{L^2(\nu_t)}
\\
&\leq
-
\left(
1-\frac{L_p}{\rho\tau}
\right)
\left\|
v_t^{\mathrm{rel}}
\right\|_{L^2(\nu_t)}^2.
\end{aligned}
$$
Since $\rho\tau>L_p$, the coefficient is positive.
Applying \eqref{eq:relaxation_field_lower_bound} once more yields
$$
\frac{d}{dt}G(t)
\leq
-
2(\rho\tau-L_p)G(t).
$$
Gronwall's inequality therefore gives
$$
G(t)
\leq
e^{-2(\rho\tau-L_p)t}G(0),
$$
as claimed.
\end{proof}

\subsection{Proof of Lemma~\ref{lem:response_contraction}}
\label{app:proof_response_contraction}

\begin{proof}
Fix $\mu_1,\mu_2\in\mathcal P_2(\mathbb R^d)$ and write
$$
\nu_1:=T(\mu_1),
\qquad
\nu_2:=T(\mu_2).
$$
Applying the lower bound in the entropy-sandwich inequality to the
frozen-target objective $F(\mu_2,\cdot)$ gives
$$
\tau D_{\mathrm{KL}}(\nu_1\|\nu_2)
\leq
F(\mu_2,\nu_1)-F(\mu_2,\nu_2).
$$
Exchanging the roles of $\mu_1$ and $\mu_2$ similarly gives
$$
\tau D_{\mathrm{KL}}(\nu_2\|\nu_1)
\leq
F(\mu_1,\nu_2)-F(\mu_1,\nu_1).
$$
Adding the two inequalities, the quadratic regularization and entropy
terms cancel because they depend only on the second argument. Hence
\begin{align}
\tau
\left[
D_{\mathrm{KL}}(\nu_1\|\nu_2)
+
D_{\mathrm{KL}}(\nu_2\|\nu_1)
\right]
\leq
\Delta_H,
\label{eq:response_symmetric_kl}
\end{align}
where
\begin{align}
\Delta_H
:={}&
H(\mu_2,\nu_1)-H(\mu_1,\nu_1)
\nonumber\\
&-
H(\mu_2,\nu_2)+H(\mu_1,\nu_2).
\label{eq:response_delta_H}
\end{align}

Let
$$
\mu_s:=(1-s)\mu_1+s\mu_2,
\qquad s\in[0,1].
$$
By the first-variation formula,
\begin{align}
\Delta_H
=
\int_0^1
\int_{\mathbb R^d}
\bigg[
\frac{\delta H}{\delta\mu}(\mu_s,\nu_1)(\theta)
-
\frac{\delta H}{\delta\mu}(\mu_s,\nu_2)(\theta)
\bigg]
(\mu_2-\mu_1)(d\theta)
\,\mathrm ds.
\label{eq:response_first_variation}
\end{align}
Assumption~\ref{ass:mixed_smoothness} implies that, for every
$s\in[0,1]$, the function inside the square brackets is
$L_pW_2(\nu_1,\nu_2)$-Lipschitz in $\theta$. Therefore,
Kantorovich--Rubinstein duality and $W_1\leq W_2$ yield
\begin{align}
\Delta_H
\leq
L_p
W_2(\mu_1,\mu_2)
W_2(\nu_1,\nu_2).
\label{eq:response_mixed_bound}
\end{align}

On the other hand, Assumption~\ref{ass:lsi} implies Talagrand's
inequality for both $\nu_1$ and $\nu_2$. Applying it in both
directions and adding gives
\begin{align}
\rho W_2^2(\nu_1,\nu_2)
\leq
D_{\mathrm{KL}}(\nu_1\|\nu_2)
+
D_{\mathrm{KL}}(\nu_2\|\nu_1).
\label{eq:response_talagrand}
\end{align}
Combining
\eqref{eq:response_symmetric_kl},
\eqref{eq:response_mixed_bound}, and
\eqref{eq:response_talagrand}, we obtain
\begin{align}
\rho\tau W_2^2(\nu_1,\nu_2)
\leq
L_p
W_2(\mu_1,\mu_2)
W_2(\nu_1,\nu_2).
\label{eq:response_contraction_predivision}
\end{align}
If $W_2(\nu_1,\nu_2)=0$, the desired inequality is immediate.
Otherwise, dividing
\eqref{eq:response_contraction_predivision} by
$W_2(\nu_1,\nu_2)$ gives
$$
W_2(T(\mu_1),T(\mu_2))
\leq
\frac{L_p}{\rho\tau}
W_2(\mu_1,\mu_2).
$$
In particular, $T$ is a contraction whenever $\rho\tau>L_p$.
\end{proof}

\subsection{Proof of Theorem~\ref{thm:policy_evaluation_convergence}}
\label{app:proof_policy_convergence}

\begin{proof}
Under $\rho\tau>L_p$, Lemma~\ref{lem:response_contraction} and
completeness of
$(\mathcal P_2(\mathbb R^d),W_2)$ imply that $T$ has a unique fixed
point $\nu_\star$, satisfying
$$
T(\nu_\star)=\nu_\star.
$$
By the triangle inequality, this fixed-point identity, and
Lemma~\ref{lem:response_contraction},
\begin{align}
W_2(\nu_t,\nu_\star)
&\leq
W_2(\nu_t,T(\nu_t))
+
W_2(T(\nu_t),T(\nu_\star))
\nonumber\\
&\leq
W_2(\nu_t,T(\nu_t))
+
\frac{L_p}{\rho\tau}
W_2(\nu_t,\nu_\star).
\end{align}
Since $\rho\tau>L_p$,
\begin{align}
W_2(\nu_t,\nu_\star)
\leq
\frac{\rho\tau}{\rho\tau-L_p}
W_2(\nu_t,T(\nu_t)).
\label{eq:equilibrium_reduction_to_response}
\end{align}

By Talagrand's inequality for the Gibbs law $T(\nu_t)$ and the lower
bound in the entropy-sandwich inequality,
\begin{align}
W_2^2(\nu_t,T(\nu_t))
&\leq
\frac{2}{\rho}
D_{\mathrm{KL}}(\nu_t\|T(\nu_t))
\nonumber\\
&\leq
\frac{2}{\rho\tau}G(t).
\label{eq:distance_to_response}
\end{align}
Combining
\eqref{eq:equilibrium_reduction_to_response} and
\eqref{eq:distance_to_response} gives
\begin{align}
W_2(\nu_t,\nu_\star)
\leq
\frac{\sqrt{2\rho\tau}}{\rho\tau-L_p}
\sqrt{G(t)}.
\label{eq:equilibrium_distance_gap}
\end{align}
Finally, Lemma~\ref{lem:gap_decay} gives
$$
G(t)
\leq
e^{-2(\rho\tau-L_p)t}G(0).
$$
Substituting this estimate into
\eqref{eq:equilibrium_distance_gap} yields
$$
W_2(\nu_t,\nu_\star)
\leq
\frac{\sqrt{2\rho\tau}}{\rho\tau-L_p}
\sqrt{G(0)}
e^{-(\rho\tau-L_p)t},
$$
which proves the claim.
\end{proof}

\subsection{Proof of Theorem~\ref{thm:policy_evaluation_error}}
\label{app:proof_policy_error}

\begin{proof}
Fix any comparator
$\nu\in\mathcal P_2(\mathbb R^d)$ satisfying
$D_{\mathrm{KL}}(\nu\|g_{\lambda,\tau})<\infty$.
Since $\nu_\star$ minimizes
$\nu'\mapsto F(\nu_\star,\nu')$, consider the interpolation
$$
\nu_\varepsilon
=
(1-\varepsilon)\nu_\star+\varepsilon\nu,
\qquad
\varepsilon\in(0,1].
$$
The optimality of $\nu_\star$ gives
$$
F(\nu_\star,\nu_\varepsilon)
-
F(\nu_\star,\nu_\star)
\geq0.
$$
Using the identity
$$
R(\nu)-\tau\mathcal H(\nu)
=
\tau D_{\mathrm{KL}}(\nu\|g_{\lambda,\tau})
-
\frac{\tau d}{2}
\log\frac{2\pi\tau}{\lambda},
$$
together with the convexity of relative entropy, we obtain the
corresponding first-order inequality as
$\varepsilon\downarrow0$. Under the on-policy transition distribution
specified in Theorem~\ref{thm:policy_evaluation_error}, the regression
term satisfies
$$
\begin{aligned}
&
\mathbb E
\left[
\left(
V_{\nu_\star}(s)
-
r(s,a)
-
\gamma V_{\nu_\star}(s')
\right)
\left(
V_\nu(s)-V_{\nu_\star}(s)
\right)
\right]
\\
&\qquad=
\left\langle
V_{\nu_\star}-T^\pi V_{\nu_\star},
V_\nu-V_{\nu_\star}
\right\rangle_{L^2(d^\pi)}.
\end{aligned}
$$
Hence,

\begin{align}
&
\left\langle
V_{\nu_\star}-T^\pi V_{\nu_\star},
V_\nu-V_{\nu_\star}
\right\rangle_{L^2(d^\pi)}
\nonumber\\
&\qquad
+
\tau
\left[
D_{\mathrm{KL}}(\nu\|g_{\lambda,\tau})
-
D_{\mathrm{KL}}(\nu_\star\|g_{\lambda,\tau})
\right]
\geq0.
\label{eq:policy_error_optimality}
\end{align}

Since $V^\pi=T^\pi V^\pi$,
\begin{align}
V_{\nu_\star}-T^\pi V_{\nu_\star}
=
(I-\gamma P^\pi)
(V_{\nu_\star}-V^\pi).
\label{eq:policy_error_bellman_residual}
\end{align}
Moreover,
$$
V_\nu-V_{\nu_\star}
=
(V_\nu-V^\pi)
-
(V_{\nu_\star}-V^\pi).
$$
Substituting these identities into
\eqref{eq:policy_error_optimality} gives
\begin{align}
&
\left\langle
(I-\gamma P^\pi)(V_{\nu_\star}-V^\pi),
V_{\nu_\star}-V^\pi
\right\rangle_{d^\pi}
\nonumber\\
&\qquad
\leq
\left\langle
(I-\gamma P^\pi)(V_{\nu_\star}-V^\pi),
V_\nu-V^\pi
\right\rangle_{d^\pi}
+
\tau\Delta_{\mathrm{KL}}(\nu),
\label{eq:policy_error_basic}
\end{align}
where
$$
\Delta_{\mathrm{KL}}(\nu)
=
D_{\mathrm{KL}}(\nu\|g_{\lambda,\tau})
-
D_{\mathrm{KL}}(\nu_\star\|g_{\lambda,\tau}).
$$

Because $d^\pi$ is stationary for $P^\pi$, the Markov operator
$P^\pi$ is nonexpansive in $L^2(d^\pi)$,
\begin{align}
\|P^\pi f\|_{L^2(d^\pi)}
\leq
\|f\|_{L^2(d^\pi)}.
\label{eq:policy_markov_nonexpansive}
\end{align}
Indeed, Jensen's inequality and stationarity give
$$
\|P^\pi f\|_{L^2(d^\pi)}^2
\leq
\int P^\pi(f^2)(s)\,d^\pi(ds)
=
\|f\|_{L^2(d^\pi)}^2.
$$
It follows that
\begin{align}
&
\left\langle
(I-\gamma P^\pi)(V_{\nu_\star}-V^\pi),
V_{\nu_\star}-V^\pi
\right\rangle_{d^\pi}
\nonumber\\
&\qquad
\geq
(1-\gamma)
\|V_{\nu_\star}-V^\pi\|_{L^2(d^\pi)}^2,
\label{eq:policy_error_coercivity}
\end{align}
and
\begin{align}
\left\|
(I-\gamma P^\pi)(V_{\nu_\star}-V^\pi)
\right\|_{L^2(d^\pi)}
\leq
(1+\gamma)
\|V_{\nu_\star}-V^\pi\|_{L^2(d^\pi)}.
\label{eq:policy_error_residual_upper}
\end{align}
Therefore, by Cauchy--Schwarz,
\eqref{eq:policy_error_basic} implies
\begin{align}
(1-\gamma)
\|V_{\nu_\star}-V^\pi\|_{L^2(d^\pi)}^2
&\leq
(1+\gamma)
\|V_{\nu_\star}-V^\pi\|_{L^2(d^\pi)}
\|V_\nu-V^\pi\|_{L^2(d^\pi)}
\nonumber\\
&\qquad
+
\tau\Delta_{\mathrm{KL}}(\nu).
\label{eq:policy_error_master}
\end{align}

Dropping the nonnegative term
$D_{\mathrm{KL}}(\nu_\star\|g_{\lambda,\tau})$ from the right-hand
side of \eqref{eq:policy_error_master}, we obtain
$$
\begin{aligned}
\|V_{\nu_\star}-V^\pi\|_{L^2(d^\pi)}^2
&\leq
\frac{1+\gamma}{1-\gamma}
\|V_\nu-V^\pi\|_{L^2(d^\pi)}
\|V_{\nu_\star}-V^\pi\|_{L^2(d^\pi)}
\\
&\qquad
+
\frac{\tau}{1-\gamma}
D_{\mathrm{KL}}(\nu\|g_{\lambda,\tau}).
\end{aligned}
$$
Using the elementary implication
$$
x^2\leq bx+c
\quad\Longrightarrow\quad
x\leq b+\sqrt c
\qquad
(x,b,c\geq0),
$$
we conclude that
\begin{align}
\|V_{\nu_\star}-V^\pi\|_{L^2(d^\pi)}
\leq
\frac{1+\gamma}{1-\gamma}
\|V_\nu-V^\pi\|_{L^2(d^\pi)}
+
\sqrt{
\frac{\tau}{1-\gamma}
D_{\mathrm{KL}}(\nu\|g_{\lambda,\tau})
}.
\end{align}
Taking the infimum over all admissible comparators $\nu$ proves the
first claim.

For the realizable case, choose $\nu=\nu^\pi$ such that
$V_{\nu^\pi}=V^\pi$. Then
\eqref{eq:policy_error_master} gives directly
$$
(1-\gamma)
\|V_{\nu_\star}-V^\pi\|_{L^2(d^\pi)}^2
\leq
\tau
\left[
D_{\mathrm{KL}}(\nu^\pi\|g_{\lambda,\tau})
-
D_{\mathrm{KL}}(\nu_\star\|g_{\lambda,\tau})
\right].
$$
Dividing by $1-\gamma$ yields
$$
\|V_{\nu_\star}-V^\pi\|_{L^2(d^\pi)}^2
\leq
\frac{\tau}{1-\gamma}
\left[
D_{\mathrm{KL}}(\nu^\pi\|g_{\lambda,\tau})
-
D_{\mathrm{KL}}(\nu_\star\|g_{\lambda,\tau})
\right].
$$
Finally,
$D_{\mathrm{KL}}(\nu_\star\|g_{\lambda,\tau})\geq0$ gives the
second inequality in the theorem.
\end{proof}

\section{Soft Q-Learning}
\label{app:soft_q}

\subsection{Frozen-Target Objective and Mixed Smoothness}
\label{app:soft_q_mixed_smoothness}

For soft Q-learning, we take the input space to be
$\mathcal X=\mathcal S\times\mathcal A$ and write
$$
Q_\nu(s,a)
:=
\int_{\mathbb R^d}
\phi((s,a);\theta)\nu(d\theta).
$$
Recall that the frozen-target soft-Q objective is
$$
H_Q(\mu,\nu)
=
\frac12
\mathbb E_{(s,a,s')\sim\mu_b}
\left[
\left(
r(s,a)
+
\gamma(\mathcal S_\beta Q_\mu)(s')
-
Q_\nu(s,a)
\right)^2
\right],
$$
where
$$
\mu_b(ds,da,ds')
=
d_b(ds,da)P(ds'\mid s,a).
$$
We set
$$
E_Q(\mu,\nu)
=
H_Q(\mu,\nu)+R(\nu),
\qquad
F_Q(\mu,\nu)
=
E_Q(\mu,\nu)-\tau\mathcal H(\nu).
$$

We first verify that the dependence of $H_Q$ on the target and online
laws satisfies the same mixed-smoothness structure as in policy
evaluation. For each target law $\mu$, define the Boltzmann policy
induced by $Q_\mu$ as
$$
\pi_\mu^\beta(da'\mid s')
:=
\frac{
\exp\left(
Q_\mu(s',a')/\beta
\right)
m_{\mathcal A}(da')
}{
\displaystyle
\int_{\mathcal A}
\exp\left(
Q_\mu(s',\widetilde a)/\beta
\right)
m_{\mathcal A}(d\widetilde a)
}.
$$
By the definition of the soft value operator,
$$
(\mathcal S_\beta Q_\mu)(s')
=
\beta
\log
\int_{\mathcal A}
\exp\left(
\frac{Q_\mu(s',a')}{\beta}
\right)
m_{\mathcal A}(da').
$$
Since
$$
\frac{\delta Q_\mu(s',a')}
{\delta\mu}(\theta)
=
\phi((s',a');\theta),
$$
differentiating the soft value operator with respect to the target law
gives
\begin{align}
\frac{\delta(\mathcal S_\beta Q_\mu)(s')}
{\delta\mu}(\theta)
&=
\frac{
\displaystyle
\int_{\mathcal A}
\exp\left(
Q_\mu(s',a')/\beta
\right)
\phi((s',a');\theta)
m_{\mathcal A}(da')
}{
\displaystyle
\int_{\mathcal A}
\exp\left(
Q_\mu(s',\widetilde a)/\beta
\right)
m_{\mathcal A}(d\widetilde a)
}
\nonumber\\
&=
\int_{\mathcal A}
\phi((s',a');\theta)
\pi_\mu^\beta(da'\mid s').
\label{eq:soft_value_first_variation}
\end{align}

Define the soft-Q temporal-difference residual by
$$
\Delta_{\mu,\nu}^Q(s,a,s')
:=
r(s,a)
+
\gamma(\mathcal S_\beta Q_\mu)(s')
-
Q_\nu(s,a).
$$
The first variation of $H_Q$ with respect to the target law is then
\begin{align}
\frac{\delta H_Q}{\delta\mu}(\mu,\nu)(\theta)
=
\gamma
\mathbb E_{\mu_b}
\left[
\Delta_{\mu,\nu}^Q(s,a,s')
\int_{\mathcal A}
\phi((s',a');\theta)
\pi_\mu^\beta(da'\mid s')
\right].
\label{eq:soft_q_target_first_variation}
\end{align}

For fixed $\mu$, both
$(\mathcal S_\beta Q_\mu)(s')$ and
$\pi_\mu^\beta(\cdot\mid s')$ are fixed.
Hence, for two online laws $\nu_1,\nu_2$,
\begin{align}
&
\nabla_\theta
\left[
\frac{\delta H_Q}{\delta\mu}(\mu,\nu_1)(\theta)
-
\frac{\delta H_Q}{\delta\mu}(\mu,\nu_2)(\theta)
\right]
\nonumber\\
&\qquad=
\gamma
\mathbb E_{\mu_b}
\left[
\left(
Q_{\nu_2}(s,a)-Q_{\nu_1}(s,a)
\right)
\int_{\mathcal A}
\nabla_\theta
\phi((s',a');\theta)
\pi_\mu^\beta(da'\mid s')
\right].
\label{eq:soft_q_mixed_difference}
\end{align}

For $x=(s,a)$, define
$$
L_\phi(x)
:=
\sup_{\theta\in\mathbb R^d}
\|\nabla_\theta\phi(x;\theta)\|.
$$
Under the bounded state-action assumption and the bounded first
derivatives of the activations used in Section~\ref{subsec:two_layer},
$$
\overline L_\phi
:=
\sup_{x\in\mathcal S\times\mathcal A}
L_\phi(x)
<
\infty.
$$
As in policy evaluation, Kantorovich--Rubinstein duality gives
$$
\left|
Q_{\nu_2}(s,a)-Q_{\nu_1}(s,a)
\right|
\leq
L_\phi(s,a)
W_2(\nu_1,\nu_2)
\leq
\overline L_\phi
W_2(\nu_1,\nu_2).
$$
Moreover, because $\pi_\mu^\beta(\cdot\mid s')$ is a probability
measure,
$$
\left\|
\int_{\mathcal A}
\nabla_\theta
\phi((s',a');\theta)
\pi_\mu^\beta(da'\mid s')
\right\|
\leq
\overline L_\phi.
$$
Substituting these bounds into
\eqref{eq:soft_q_mixed_difference} yields
$$
\left\|
\nabla_\theta
\left[
\frac{\delta H_Q}{\delta\mu}(\mu,\nu_1)(\theta)
-
\frac{\delta H_Q}{\delta\mu}(\mu,\nu_2)(\theta)
\right]
\right\|
\leq
L_p^Q
W_2(\nu_1,\nu_2),
$$
where one may take
$$
L_p^Q
:=
\gamma\overline L_\phi^2.
$$
Thus the soft-Q frozen-target objective satisfies the same
mixed-smoothness condition used in the policy-evaluation analysis.

The corresponding noisy mean-field semi-gradient dynamics are
$$
\partial_t q_t
=
\nabla_\theta\cdot
\left(
q_t
\nabla_\theta
\frac{\delta F_Q}{\delta\nu}
(\nu_t,\nu_t)
\right),
\qquad
\nu_t(d\theta)=q_t(\theta)d\theta.
$$

\subsection{Proof of Theorem~\ref{thm:soft_q_convergence}}
\label{app:proof_soft_q_convergence}

\begin{proof}
By Appendix~\ref{app:soft_q_mixed_smoothness}, the frozen-target
soft-Q objective satisfies the same mixed-smoothness condition as in
policy evaluation, with constant $L_p^Q$. Moreover, for each fixed
target law $\mu$, the Gibbs characterization, the uniform
log-Sobolev inequality, and the entropy-sandwich inequality apply to
$F_Q(\mu,\cdot)$ exactly as they do to $F(\mu,\cdot)$.

Therefore, the proof of Lemma~\ref{lem:gap_decay} applies with
$$
F,\ E,\ H,\ T,\ L_p,\ G
\quad\text{replaced by}\quad
F_Q,\ E_Q,\ H_Q,\ T_Q,\ L_p^Q,\ G_Q.
$$
Hence
\begin{equation}
\frac{\mathrm d}{\mathrm dt}G_Q(t)
\leq
-2(\rho\tau-L_p^Q)G_Q(t),
\label{eq:soft_q_gap_decay_appendix}
\end{equation}
and consequently
\begin{equation}
G_Q(t)
\leq
e^{-2(\rho\tau-L_p^Q)t}G_Q(0).
\label{eq:soft_q_gap_decay_exp_appendix}
\end{equation}

Likewise, the proof of
Lemma~\ref{lem:response_contraction} applies to the soft-Q Gibbs
response map and gives
\begin{equation}
W_2(T_Q(\mu_1),T_Q(\mu_2))
\leq
\frac{L_p^Q}{\rho\tau}
W_2(\mu_1,\mu_2).
\label{eq:soft_q_response_contraction_appendix}
\end{equation}
Thus, if $\rho\tau>L_p^Q$, the map $T_Q$ is a contraction on
$\mathcal P_2(\mathbb R^d)$ and admits a unique fixed point
$\nu_\star^Q$.

Finally, repeating the argument in the proof of
Theorem~\ref{thm:policy_evaluation_convergence},
\begin{align}
W_2(\nu_t,\nu_\star^Q)
&\leq
\frac{\rho\tau}{\rho\tau-L_p^Q}
W_2(\nu_t,T_Q(\nu_t))
\nonumber\\
&\leq
\frac{\sqrt{2\rho\tau}}{\rho\tau-L_p^Q}
\sqrt{G_Q(t)}.
\end{align}
Combining this inequality with
\eqref{eq:soft_q_gap_decay_exp_appendix} yields
$$
W_2(\nu_t,\nu_\star^Q)
\leq
\frac{\sqrt{2\rho\tau}}{\rho\tau-L_p^Q}
\sqrt{G_Q(0)}
e^{-(\rho\tau-L_p^Q)t},
$$
which proves the theorem.
\end{proof}

\subsection{Proof of Theorem~\ref{thm:soft_q_error}}

\begin{proof}
Fix any comparator $\nu$ satisfying
$D_{\mathrm{KL}}(\nu\|g_{\lambda,\tau})<\infty$.
Since
$$
\nu_\star^Q
\in
\arg\min_{\nu'}
F_Q(\nu_\star^Q,\nu'),
$$
the same interpolation and convexity argument used in the proof of
Theorem~\ref{thm:policy_evaluation_error} gives
$$
\left\langle
Q_{\nu_\star^Q}-\mathcal{T}_\beta Q_{\nu_\star^Q},\,
Q_\nu-Q_{\nu_\star^Q}
\right\rangle_{L^2(d_b)}
+
\tau
\left[
D_{\mathrm{KL}}(\nu\|g_{\lambda,\tau})
-
D_{\mathrm{KL}}(\nu_\star^Q\|g_{\lambda,\tau})
\right]
\ge 0.
$$

Using
$$
Q_\nu-Q_{\nu_\star^Q}
=
\left(
Q_\nu-\mathcal{T}_\beta Q_{\nu_\star^Q}
\right)
-
\left(
Q_{\nu_\star^Q}-\mathcal{T}_\beta Q_{\nu_\star^Q}
\right),
$$
we obtain
$$
\begin{aligned}
\left\|
Q_{\nu_\star^Q}-\mathcal{T}_\beta Q_{\nu_\star^Q}
\right\|_{L^2(d_b)}^2
&\le
\left\langle
Q_{\nu_\star^Q}-\mathcal{T}_\beta Q_{\nu_\star^Q},\,
Q_\nu-\mathcal{T}_\beta Q_{\nu_\star^Q}
\right\rangle_{L^2(d_b)}
\\
&\quad+
\tau
\left[
D_{\mathrm{KL}}(\nu\|g_{\lambda,\tau})
-
D_{\mathrm{KL}}(\nu_\star^Q\|g_{\lambda,\tau})
\right].
\end{aligned}
$$
By Cauchy--Schwarz and the nonnegativity of relative entropy,
$$
\begin{aligned}
\left\|
Q_{\nu_\star^Q}-\mathcal{T}_\beta Q_{\nu_\star^Q}
\right\|_{L^2(d_b)}^2
&\le
\left\|
Q_{\nu_\star^Q}-\mathcal{T}_\beta Q_{\nu_\star^Q}
\right\|_{L^2(d_b)}
\left\|
Q_\nu-\mathcal{T}_\beta Q_{\nu_\star^Q}
\right\|_{L^2(d_b)}
\\
&\quad+
\tau D_{\mathrm{KL}}(\nu\|g_{\lambda,\tau}).
\end{aligned}
$$
Using the elementary implication
$$
x^2\le bx+c
\quad\Longrightarrow\quad
x\le b+\sqrt{c},
\qquad
x,b,c\ge0,
$$
we conclude that
$$
\left\|
Q_{\nu_\star^Q}-\mathcal{T}_\beta Q_{\nu_\star^Q}
\right\|_{L^2(d_b)}
\le
\left\|
Q_\nu-\mathcal{T}_\beta Q_{\nu_\star^Q}
\right\|_{L^2(d_b)}
+
\sqrt{
\tau D_{\mathrm{KL}}(\nu\|g_{\lambda,\tau})
}.
$$

The soft Bellman operator is a $\gamma$-contraction in the sup norm,
$$
\left\|
\mathcal{T}_\beta Q-\mathcal{T}_\beta Q'
\right\|_\infty
\le
\gamma
\left\|
Q-Q'
\right\|_\infty.
$$
Since
$Q_\beta^\star=\mathcal{T}_\beta Q_\beta^\star$,
we have
$$
\begin{aligned}
\left\|
Q_{\nu_\star^Q}-Q_\beta^\star
\right\|_\infty
&\le
\left\|
Q_{\nu_\star^Q}
-
\mathcal{T}_\beta Q_{\nu_\star^Q}
\right\|_\infty
+
\left\|
\mathcal{T}_\beta Q_{\nu_\star^Q}
-
\mathcal{T}_\beta Q_\beta^\star
\right\|_\infty
\\
&\le
\left\|
Q_{\nu_\star^Q}
-
\mathcal{T}_\beta Q_{\nu_\star^Q}
\right\|_\infty
+
\gamma
\left\|
Q_{\nu_\star^Q}-Q_\beta^\star
\right\|_\infty.
\end{aligned}
$$
Therefore,
$$
\left\|
Q_{\nu_\star^Q}-Q_\beta^\star
\right\|_\infty
\le
\frac{1}{1-\gamma}
\left\|
Q_{\nu_\star^Q}
-
\mathcal{T}_\beta Q_{\nu_\star^Q}
\right\|_\infty.
$$

Because $d_{\min}>0$, every function $h$ on
$\mathcal{S}\times\mathcal{A}$ satisfies
$$
\|h\|_\infty
\le
\frac{1}{\sqrt{d_{\min}}}
\|h\|_{L^2(d_b)}.
$$
Hence,
$$
\begin{aligned}
\left\|
Q_{\nu_\star^Q}-Q_\beta^\star
\right\|_\infty
&\le
\frac{1}{(1-\gamma)\sqrt{d_{\min}}}
\left\|
Q_{\nu_\star^Q}
-
\mathcal{T}_\beta Q_{\nu_\star^Q}
\right\|_{L^2(d_b)}
\\
&\le
\frac{1}{(1-\gamma)\sqrt{d_{\min}}}
\left\{
\left\|
Q_\nu-\mathcal{T}_\beta Q_{\nu_\star^Q}
\right\|_{L^2(d_b)}
+
\sqrt{
\tau D_{\mathrm{KL}}(\nu\|g_{\lambda,\tau})
}
\right\}.
\end{aligned}
$$
Taking the infimum over all parameter laws $\nu$ with finite
$D_{\mathrm{KL}}(\nu\|g_{\lambda,\tau})$ proves the claim.
\end{proof}

\section{Experimental Details}
\label{app:experiments}

\paragraph{Environments and data collection.}
We consider policy evaluation on Asterix and Breakout in MinAtar.
For each environment, we use the pretrained PPO checkpoint distributed
with the \texttt{gymnax-blines} repository
(\texttt{agents/Asterix-MinAtar/ppo.pkl} and
\texttt{agents/Breakout-MinAtar/ppo.pkl}, respectively), and keep the
policy fixed throughout the experiment.
The repository provides the PPO training pipeline together with the
corresponding pretrained checkpoints in
\texttt{agents/<env\_name>/ppo.pkl}. 
Using these fixed stochastic policies, we collect a transition pool by
running $2048$ parallel environments for $128$ steps, yielding
$262{,}144$ transitions per environment.
Terminal transitions use the final observation before the environment
reset. From each pool, we use a fixed set of $16{,}384$ transitions for
training and a disjoint set of $65{,}536$ transitions for evaluation.
The same data split is used for all methods. These finite-rollout
distributions are used only for the numerical illustration and are not
intended to exactly represent a stationary state distribution.

\paragraph{Finite-width dynamics.}
We use the finite-particle approximation of the two-layer
parameterization in Section~2.4,
\[
V_M(s;\boldsymbol{\theta})
=
\frac{1}{M}\sum_{i=1}^M
\phi(s;\theta_i),
\qquad
\phi(s;\theta_i)
=
C\tanh(c_i)\tanh(w_i^\top s+b_i),
\]
with width $M=4096$ and output scale $C=100$.
Observations are flattened before being passed to the network.
We initialize
\[
w_i \sim \mathcal{N}(0,I)/\sqrt{d_{\mathrm{in}}},
\qquad
b_i \sim 0.1\,\mathcal{N}(0,1),
\qquad
c_i \sim \mathcal{N}(0,1),
\]
and use the same initialization for all compared methods.

At each iteration, the semi-gradient is computed on the entire fixed training set, with gradients stopped through the bootstrap target. We use discount factor $\gamma=0.99$, step size $\eta=10^{-4}$, and run for $2\times 10^5$ iterations, corresponding to continuous time $T=20$ under $t=k\eta$.
We compare
\[
(\lambda,\tau)\in
\left\{
(0,0),\,
(0.1,0),\,
(0.1,2\times10^{-3})
\right\},
\]
corresponding respectively to no regularization, quadratic
regularization only, and quadratic regularization with Langevin noise.
For $\tau>0$, the particle update includes independent Gaussian noise
with scale $\sqrt{2\tau\eta}$ as in the noisy semi-gradient dynamics.

\paragraph{Evaluation and randomness.}
We report the empirical counterpart of $H(\nu_t,\nu_t)$ on the fixed
evaluation set,
\[
\widehat H_{\mathrm{eval}}(\nu_t,\nu_t)
=
\frac{1}{2|\mathcal D_{\mathrm{eval}}|}
\sum_{(s,r,s')\in\mathcal D_{\mathrm{eval}}}
\left(
r+\gamma V(s';\nu_t)-V(s;\nu_t)
\right)^2.
\]
For $\tau=0$, the update is deterministic given the initialization and
dataset, and we therefore report a single trajectory.
For $\tau=2\times10^{-3}$, we keep the initialization fixed and vary
only the Langevin-noise realization. Figure~\ref{fig:minatar} reports
the mean and one standard deviation over five noise seeds.

\end{document}